\documentclass[sigconf]{acmart}

\usepackage{subcaption}
\usepackage{booktabs}
\usepackage{multirow}
\usepackage{amsmath}

\usepackage{amssymb}
\usepackage{mathtools}
\usepackage{amsthm}
\usepackage{diagbox}

\theoremstyle{plain}
\newtheorem{theorem}{Theorem}[section]
\newtheorem{proposition}[theorem]{Proposition}

\theoremstyle{definition}

\newtheorem{assumption}[theorem]{Assumption}
\theoremstyle{remark}
\newtheorem{remark}[theorem]{Remark}
\AtBeginDocument{%
  }

\setcopyright{none}

\renewcommand\footnotetextcopyrightpermission[1]{}

\begin{document}

\title[Beyond First-Order: Learning Riemannian Geometries for Invariant Visual Place Recognition]{Beyond First-Order: Learning Riemannian Geometries for Invariant Visual Place Recognition}

\author{Jintao Cheng\textsuperscript{*}, Weibin Li\textsuperscript{**}, Zhijian He\textsuperscript{*}, Jin Wu\textsuperscript{***}, Chi Man VONG\textsuperscript{**} and Wei Zhang\textsuperscript{*}}
\affiliation{%
  \institution{\textsuperscript{*}\textit{The Hong Kong University of Science and Technology, Hong Kong, China}\\
  \textsuperscript{**}\textit{University of Macau, Macau, China}\\
  \textsuperscript{***}\textit{University of Science and Technology Beijing, Beijing, China}}
  \country{}}

\renewcommand{\shortauthors}{Cheng et al.}
\begin{abstract}
Visual Place Recognition (VPR) demands representations robust to drastic environmental and viewpoint shifts.  Existing aggregation paradigms either depend on extensive supervised training or rely on first-order pooling, often struggling to preserve structural correlations under extreme shifts or incurring high adaptation costs.  In this work, we propose Riemannian Invariant Aggregation (RIA), a unified geometric framework that explicitly models second-order scene structure on the Symmetric Positive Definite (SPD) manifold. By treating perturbations as tractable congruence transformations, RIA leverages geometry-aware Riemannian mappings to project covariance descriptors into a linearized Euclidean space, effectively preserving invariant structural components while suppressing noise. Extensive evaluations demonstrate that RIA achieves zero-shot performance comparable to supervised methods, and establishes state-of-the-art accuracy with simple fine-tuning, particularly in unstructured environments. The source code will be released.
\end{abstract}

\keywords{Visual Place Recognition, Riemannian Geometry, SPD Manifold, Second-Order Statistics}

\maketitle
\begingroup
\renewcommand\thefootnote{}
\footnotetext{J. Cheng, Z. He, and W. Zhang are with The Hong Kong University of Science and Technology (e-mail: Jchengau@connect.ust.hk; eeweiz@ust.hk).
Corresponding author: Wei Zhang.}
\endgroup

\section{Introduction}

Visual Place Recognition (VPR) constitutes a cornerstone of long-term autonomous navigation, enabling robots to accurately localize within previously visited environments despite drastic changes in visual appearance \cite{lowry2015visual, zhang2021visual}. The fundamental challenge of this task stems from the stringent requirement to simultaneously satisfy two distinct and often conflicting objectives: \textbf{condition invariance} \cite{milford2012seqslam, sattler2018benchmarking} and \textbf{viewpoint invariance} \cite{hausler2021patch, berton2022rethinking}, as illustrated in Fig. \ref{fig:1}. On one hand, environmental dynamics---such as day-to-night transitions, seasonal cycles, and weather fluctuations---induce drastic non-linear photometric distortions that severely degrade feature stability \cite{zaffar2021vpr}. On the other hand, variations in the camera's 6-DoF pose result in complex projective transformations, scale changes, and occlusions, thereby disrupting spatial correspondence \cite{warburg2020mapillary}. Consequently, constructing a robust representation capable of withstanding these compounded perturbations while capturing the intrinsic essence of a scene remains a core scientific problem in the field.

\begin{figure}[ht]
  \centering
  \includegraphics[width=0.47\textwidth]{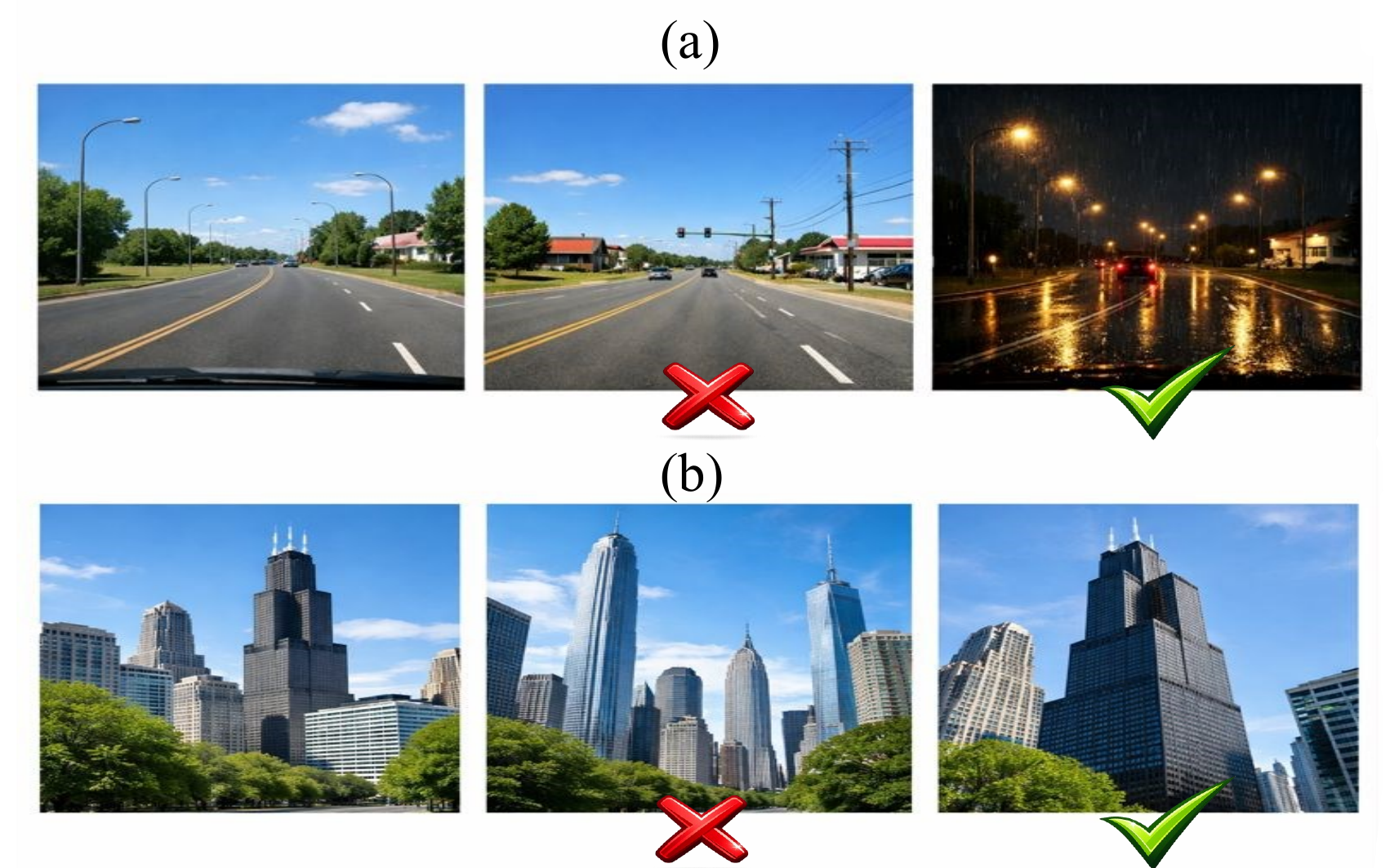}
  \caption{Illustration of the two fundamental challenges in Visual Place Recognition (VPR). (a) Condition Invariance: The system must identify the correct match (right) despite drastic illumination and weather changes (e.g., sunny day vs. rainy night), while rejecting perceptually similar scenes from different locations (middle). (b) Viewpoint Invariance: The system must recognize the same landmark (right) under significant changes in scale and camera pose, distinguishing it from other similar-looking structures (middle).}
  \label{fig:1}
\end{figure}

\par
To obtain such invariant representations, modern VPR methodologies have converged to a two-stage paradigm: a feature extraction Backbone $\Phi$ and a global Aggregation Head $\Psi$. Specifically, the backbone $\Phi$ is responsible for extracting discriminative local features from raw images, while the aggregation head $\Psi$ fuses these local cues into a compact global descriptor. Historically, extensive research efforts have been dedicated to training specialized backbones \cite{arandjelovic2016netvlad, berton2022rethinking, zhu2023r2former}. However, with the emergence of powerful visual foundation models represented by the DINO series \cite{zhang2022dino}, the capability to extract robust local features has become increasingly mature and generalized. The critical question now lies in how to organize these high-quality features to construct a global descriptor that remains stable under the aforementioned geometric and photometric transformations.
\par
Existing aggregation methods primarily fall into two categories, yet both face distinct dilemmas. The first category is Supervised Implicit Aggregation \cite{ali2023mixvpr, lu2024cricavpr}. These methods employ massive Multi-Layer Perceptrons (MLPs) or attention mechanisms, attempting to implicitly approximate the compositional rules between features through intensive training on specific datasets. However, this strategy essentially encodes domain-specific data-driven priors rather than learning general physical laws, leading to heavy reliance on training data and limited generalization capabilities in cross-domain scenarios. The second category is Unsupervised First-order Aggregation \cite{keetha2023anyloc, malone2025hyperdimensional}. To maintain an unsupervised nature, these methods often revert to rudimentary aggregation schemes, such as Generalized Mean (GeM) pooling \cite{radenovic2018fine} or VLAD clustering. While these methods based on First-order Statistics are general, they neglect the rich structural correlations among features, resulting in inherent sensitivity to environmental perturbations.
\par
We advocate revisiting this problem from a geometric perspective to overcome these limitations. Drastic environmental changes (such as illumination scaling and viewpoint rotation) can be approximately modeled as affine transformations within the feature space. We demonstrate through theoretical derivation and extensive experiments that, first-order statistics (mean) are unstable and drift with transformations. However, second-order statistics (covariance) exhibit a unique advantage---they possess an intrinsic Congruence Property. This implies that the covariance matrix can naturally resist additive noise and maintain the consistency of the topological structure of feature distribution under multiplicative transformations. Mathematically, this second-order structure resides on the Symmetric Positive Definite (SPD) manifold \cite{huang2017riemannian}. This provides us with a theoretical foundation to achieve robustness relying solely on the intrinsic geometric properties of the data, without depending on parameter learning.
\par
In this work, we introduce Riemannian Invariant Aggregation (RIA) operator, an explicit geometric modeling paradigm. RIA explicitly extracts the intrinsic second-order covariance structure of the scene, obviating the need for learned parameters to approximate structural information. We leverage Riemannian Manifold Mapping to project the non-Euclidean geometric structure into a statistically well-behaved tangent space, thereby achieving compatibility with standard Euclidean retrieval. By leveraging analytical statistical properties and geometry-aware manifold mapping, RIA establishes a \textit{unified aggregation framework} that bridges zero-shot deployment and target-domain adaptation. This design preserves theoretical interpretability while delivering competitive zero-shot robustness across diverse environmental shifts, and further exploits these geometric priors to achieve state-of-the-art accuracy through fine-tuning.
\par

The principal contributions of this work are summarized as follows:
\par
1) We provide a theoretical analysis from a Riemannian perspective, showing that second-order statistics residing on the SPD manifold exhibit enhanced stability under viewpoint and illumination variations compared to conventional first-order aggregates.
\par
2) We introduce the Riemannian Invariant Aggregation (RIA) operator, a \textit{unified geometric framework} that seamlessly supports both \textbf{zero-shot deployment} and \textbf{fine-tuning}. By projecting second-order features onto the tangent space, RIA produces compact descriptors that accelerate retrieval and minimize memory overhead without accuracy loss.
\par
3) Extensive evaluations across diverse VPR benchmarks, encompassing a wide range of environmental conditions, demonstrate that RIA achieves zero-shot performance comparable to supervised methods. Furthermore, RIA establishes state-of-the-art results with fine-tuning, particularly in challenging unstructured environments. The source code will be released as open source.

\section{Related Work}

\subsection{Visual Place Recognition}
Contemporary VPR pipelines generally adhere to a two-stage paradigm: extracting local features via a backbone and fusing them into a global descriptor. Early approaches primarily fine-tuned CNNs \cite{arandjelovic2016netvlad, radenovic2018fine}, later evolving into massive classification frameworks to learn viewpoint robustness \cite{berton2022rethinking, berton2023eigenplaces}. With the advent of Vision Transformers, recent methods integrate attention mechanisms to capture long-range dependencies \cite{wang2022transvpr, zhu2023r2former}. While foundation models have demonstrated strong zero-shot capabilities \cite{keetha2023anyloc}, adapter-based approaches often re-introduce training phases to align features with the target domain \cite{lu2024towards, lu2024cricavpr}.
\par
The aggregation module plays a pivotal role in encoding invariance. Traditional methods rely on first-order statistics or clustering mechanisms \cite{arandjelovic2016netvlad, radenovic2018fine, lu2024supervlad}. To capture complex structural relationships, supervised methods employ parameter-heavy MLPs or cross-image attention \cite{ali2023mixvpr, lu2024cricavpr}, though they often suffer from limited generalization across domains. Conversely, unsupervised methods typically revert to rudimentary pooling or optimal transport schemes \cite{keetha2023anyloc, izquierdo2024optimal} to maintain their training-free nature.

\subsection{Deep Learning on SPD Manifolds}

The integration of Riemannian geometry into deep neural networks has significantly advanced the processing of second-order statistics. This paradigm was pioneered by SPDNet\cite{huang2017riemannian}, which generalized conventional CNN operations---such as bilinear mapping and eigenvalue rectification---to the SPD manifold. Building on this, DreamNet\cite{wang2022dreamnet} bridged Euclidean and Riemannian domains to learn more discriminative deep representations. Subsequent research has focused on enhancing both representational power and training efficiency. Notable contributions include introducing Riemannian local mechanisms to capture fine-grained structure\cite{chen20231}, meta-learning optimizers to handle manifold constraints\cite{gao2020learning}, and exploring space quantization for efficient representation\cite{tang2020generalized}. Besides, these architectural advancements have driven progress in metric and similarity learning, particularly for Image Set Classification. Huang et al.\cite{huang2017geometry} proposed geometry-aware similarity learning to handle visual distortions, while Wang et al.\cite{wang2022deep} developed deep metric learning frameworks specifically tailored for set-based classification on the manifold.

However, the potential of SPD manifold learning in Visual Place Recognition (VPR) remains largely untapped. Unlike Image Set Classification, which typically operates in closed-set scenarios, VPR demands descriptors that are explicitly robust to drastic, open-world environmental shifts and viewpoint deviations. Our work addresses this gap by leveraging the intrinsic geometry of SPD matrices to construct a second-order representation that ensures geometric stability in challenging open-world scenarios.

\section{Preliminaries}
\label{sec:preliminaries}

In this section, we formulate the geometric foundation of modeling image representations on the Symmetric Positive Definite (SPD) manifold. Throughout the paper, vectors are denoted by bold lower-case letters (e.g., $\boldsymbol{x}$), and matrices by bold upper-case letters (e.g., $\boldsymbol{C}$).

\textbf{The SPD Manifold.} 
The space of $d \times d$ Symmetric Positive Definite matrices, denoted as $\mathcal{S}_{++}^d$, is defined as:
\begin{equation}
\begin{split}
    \mathcal{S}_{++}^d = \Big\{ & \boldsymbol{M} \in \mathbb{R}^{d \times d} \mid \boldsymbol{M} = \boldsymbol{M}^\top, \\
    & \boldsymbol{v}^\top \boldsymbol{M} \boldsymbol{v} > 0, \forall \boldsymbol{v} \in \mathbb{R}^d \setminus \{\mathbf{0}\} \Big\}.
\end{split}
\end{equation}
Geometrically, $\mathcal{S}_{++}^d$ forms a convex cone endowed with a Riemannian metric. Standard Euclidean operations on this manifold ignore the intrinsic geodesic curvature, often leading to the ``swelling effect,'' where the determinant of the matrix increases artificially, introducing noise into the representation.

\textbf{Power Euclidean Metric (PEM).} 
To perform efficient retrieval while respecting the manifold geometry, we adopt the Power Euclidean Metric\cite{dryden2010power} framework. This maps the Riemannian manifold to a linearized metric space via a matrix power transformation. Given any two descriptors $\boldsymbol{C}_1, \boldsymbol{C}_2 \in \mathcal{S}_{++}^d$, the PEM distance with power $\alpha \in (0, 1]$ is defined as:
\begin{equation}
    \label{eq:pem_dist}
    d_{\text{PEM}}(\boldsymbol{C}_1, \boldsymbol{C}_2) = \frac{1}{\alpha} \left\| \boldsymbol{C}_1^\alpha - \boldsymbol{C}_2^\alpha \right\|_F.
\end{equation}
In this work, we focus on the matrix square root ($\alpha = 0.5$), which effectively flattens the manifold curvature and approximates the Riemannian geodesic distance. To validate the effectiveness of the power parameter $\alpha$ and determine its optimal value for our task, we conduct extensive experiments on this parameter, with detailed analyses and results presented in the section \ref{app:Power Parameter}.

\textbf{Isometric Vectorization.} 
To interface with standard vector-based search engines, a symmetric matrix $\boldsymbol{M}$ must be flattened. To ensure this mapping $\psi: \mathcal{S}_{++}^d \to \mathbb{R}^{d(d+1)/2}$ is an isometry, the off-diagonal elements must be scaled:
\begin{equation}
    \label{eq:vectorization}
    \scalebox{0.9}{$\operatorname{vec}(\boldsymbol{M}) = \left[ m_{1,1}, \ldots, m_{d,d}, \sqrt{2}m_{1,2}, \ldots, \sqrt{2}m_{d-1,d} \right]^\top$}.
\end{equation}

\section{Methodology}
\label{sec:method}

\setlength{\textfloatsep}{8pt}

\begin{figure*}[ht]
  \centering
  \includegraphics[width=0.9\textwidth]{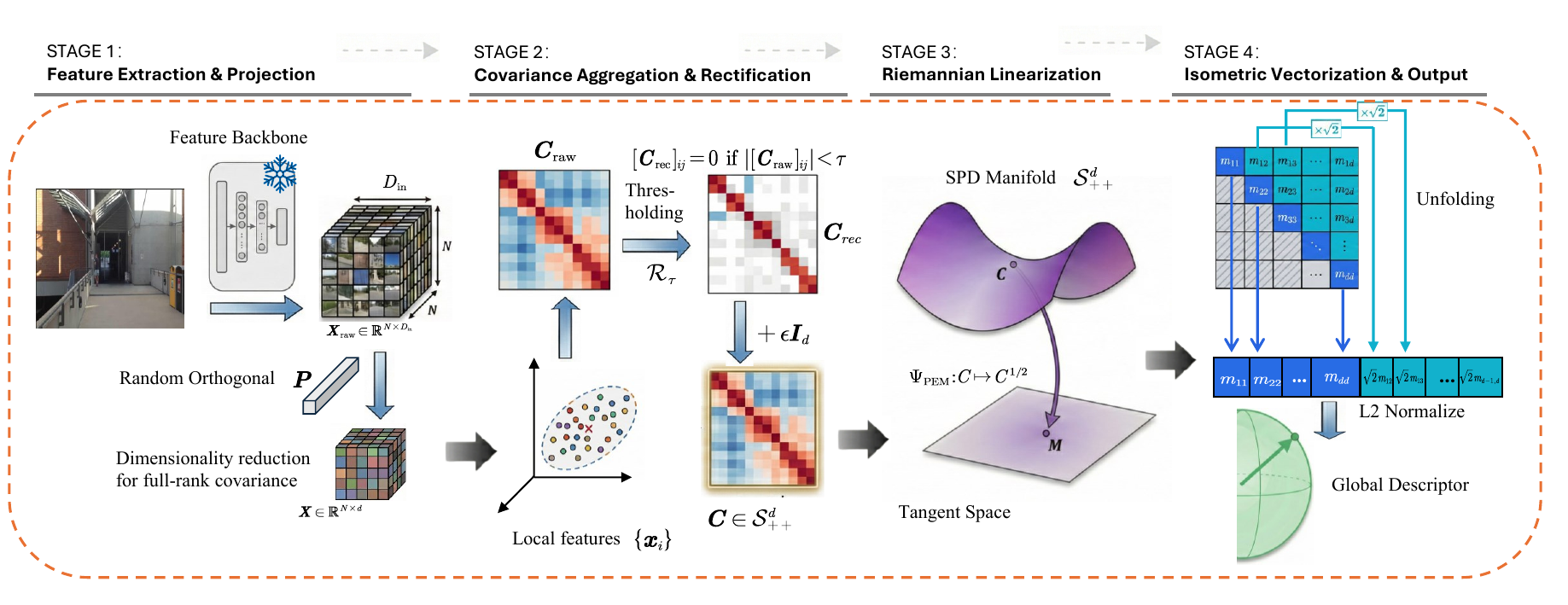}
  \caption{\textbf{Schematic overview of the proposed Riemannian Invariant Aggregation (RIA) framework.} The pipeline transforms local features from a frozen backbone into a robust global descriptor through four geometric phases: \textbf{Stage 1:} High-dimensional features are projected onto a lower-dimensional subspace to ensure a full-rank covariance estimation. \textbf{Stage 2:} We compute the sample covariance and apply ReCov to suppress spurious noise. \textbf{Stage 3:} The covariance descriptor on the SPD manifold is mapped to a linearized tangent space via the PEM, approximated by Newton-Schulz iterations. \textbf{Stage 4:} The matrix is flattened using isometric vectorization (scaling off-diagonals by $\sqrt{2}$) and $L_2$ normalized to produce the final retrieval-ready descriptor.}
  \label{fig:frame}
\end{figure*}

We introduce the Riemannian Invariant Aggregation (RIA) framework, a unified pipeline designed to construct robust scene descriptors for VPR, as shown in Fig. \ref{fig:frame}. Our method transforms local deep features into a global descriptor on the SPD manifold through four logically coupled phases: (1) \textbf{feature projection and aggregation} to construct the compact covariance representation, (2) \textbf{sparse structural rectification} to suppress spurious correlations, (3) \textbf{iterative Riemannian linearization} to approximate the geodesic metric efficiently, and (4) \textbf{isometric vectorization} to bridge the manifold geometry with Euclidean retrieval engines.

\subsection{Feature Projection and Aggregation}
\label{sec:feature_agg}
Current VPR approaches typically rely on first-order pooled features, which suffer from perceptual aliasing, are sensitive to viewpoint changes, and lack robustness under illumination variations. We theoretically establish that second-order statistics effectively mitigate these issues by encoding structural correlations and offering inherent robustness to geometric and photometric transformations; hence, we adopt the covariance matrix as a feature representation at this stage to enhance descriptor robustness.

Given an input image $I$, we utilize a pre-trained Visual Foundation Model to extract dense local representations. We extract a set of $N$ patch-level local features, denoted as a matrix $\boldsymbol{X}_{\text{raw}} \in \mathbb{R}^{N \times D_{\text{in}}}$, where $D_{\text{in}}$ represents the native feature dimension. To ensure the resulting covariance matrix is full-rank and strictly resides on the SPD manifold, we project the features into a lower-dimensional subspace $\mathbb{R}^d$ ($d < N$) using a fixed random orthogonal matrix $\boldsymbol{P} \in \mathbb{R}^{D_{\text{in}} \times d}$:
\begin{equation}
    \boldsymbol{X} = \boldsymbol{X}_{\text{raw}} \boldsymbol{P},
\end{equation}
where $\boldsymbol{X} = [\boldsymbol{x}_1, \dots, \boldsymbol{x}_N]^\top \in \mathbb{R}^{N \times d}$ represents the projected feature matrix. Here, $\boldsymbol{x}_i \in \mathbb{R}^d$ denotes the $i$-th projected local descriptor, which is the $i$-th row of $\boldsymbol{X}$ treated as a column vector. Subsequently, we aggregate these descriptors into a global sample covariance matrix $\boldsymbol{C}_{\text{raw}} \in \mathbb{R}^{d \times d}$:
\begin{equation}
    \boldsymbol{C}_{\text{raw}} = \frac{1}{N-1} \sum_{i=1}^{N} (\boldsymbol{x}_i - \bar{\boldsymbol{x}})(\boldsymbol{x}_i - \bar{\boldsymbol{x}})^\top,
\end{equation}
where $\bar{\boldsymbol{x}} = \frac{1}{N}\sum_{i=1}^N \boldsymbol{x}_i$ is the mean feature vector.

\subsection{Sparse Structural Rectification}
\label{sec:recov}

Sample covariance matrices derived from high-dimensional deep features often serve as noisy estimators of the underlying scene geometry. Specifically, when the feature dimension $d$ is large relative to the number of patches $N$, the off-diagonal entries of $\boldsymbol{C}_{\text{raw}}$ are prone to spurious correlations that distort the Riemannian structure. To recover the latent salient structure, we employ a Rectified Covariance(ReCov) mechanism, which can be viewed as a non-linear thresholding operator $\mathcal{R}_{\tau}(\cdot)$ aimed at consistent covariance estimation in high dimensions.

We define the rectified matrix $\boldsymbol{C}_{\text{rec}} = \mathcal{R}_{\tau}(\boldsymbol{C}_{\text{raw}})$ element-wise as:
\begin{equation}
    [\boldsymbol{C}_{\text{rec}}]_{ij} = 
    \begin{cases} 
    [\boldsymbol{C}_{\text{raw}}]_{ij} & \text{if } i=j \text{ or } |[\boldsymbol{C}_{\text{raw}}]_{ij}| > \tau, \\ 
    0 & \text{otherwise},
    \end{cases}
\end{equation}
where $\tau \ge 0$ is the structural saliency threshold. Hard thresholding enables consistent estimation of sparse covariance matrices by removing noise-induced correlations while preserving dominant structural components. Although it may compromise positive definiteness, the subsequent regularization step ensures geometric integrity on the SPD manifold.

To ensure the resulting matrix strictly resides in the interior of the SPD manifold and to maintain numerical stability during the subsequent matrix square root computation, we apply the following regularization to obtain the final SPD descriptor $\boldsymbol{C}$:
\begin{equation}
    \boldsymbol{C} = \boldsymbol{C}_{\text{rec}} + \epsilon \boldsymbol{I}_d,
\end{equation}
where $\boldsymbol{C} \in \mathcal{S}_{++}^d$ represents the regularized covariance matrix, $\boldsymbol{I}_d \in \mathbb{R}^{d \times d}$ denotes the identity matrix, and $\epsilon > 0$ is a small regularization constant. This step effectively shifts the eigenvalues away from zero, yielding a strictly positive definite representation that captures the stable second-order geometric signature of the scene.

\subsection{Iterative Riemannian Linearization}
\label{sec:newton_schulz}

A core challenge in manifold learning involves comparing descriptors using the intrinsic geodesic distance, which is computationally expensive due to its heavy reliance on matrix logarithms and inversions. To circumvent this, we adopt the PEM framework to map the curved Riemannian manifold to a linearized metric space where Euclidean operations remain geometry-aware. Formally, our objective is to compute the matrix square root $\boldsymbol{C}^{1/2}$ for the regularized covariance matrix $\boldsymbol{C} \in \mathcal{S}_{++}^d$ .

While $\boldsymbol{C}^{1/2}$ can be obtained via standard Eigenvalue Decomposition (EIG), this approach typically exhibits $O(d^3)$ complexity and gradient instability on GPU devices. Instead, we employ the coupled Newton-Schulz iteration~\cite{li2018towards} for an efficient numerical approximation. Since this iterative process converges locally only when the spectral radius is less than one, we first perform pre-normalization:
\begin{equation}
    \label{eq:pre_norm}
    \boldsymbol{A}_0 = \frac{1}{\|\boldsymbol{C}\|_F} \boldsymbol{C},
\end{equation}
where $\|\boldsymbol{C}\|_F = \sqrt{\text{tr}(\boldsymbol{C}^\top \boldsymbol{C})}$ denotes the Frobenius norm. This normalization ensures the eigenvalues of $\boldsymbol{A}_0$ reside within the convergence domain $(0, 1]$. We then apply the coupled update rules for $k = 1, \dots, K$. Initializing $\boldsymbol{Y}_0 = \boldsymbol{A}_0$ and $\boldsymbol{Z}_0 = \boldsymbol{I}_d$, the iteration proceeds as:
\begin{align}
    \boldsymbol{Y}_{k} &= \frac{1}{2}\boldsymbol{Y}_{k-1}(3\boldsymbol{I}_d - \boldsymbol{Z}_{k-1}\boldsymbol{Y}_{k-1}), \\
    \boldsymbol{Z}_{k} &= \frac{1}{2}(3\boldsymbol{I}_d - \boldsymbol{Z}_{k-1}\boldsymbol{Y}_{k-1})\boldsymbol{Z}_{k-1},
\end{align}
where $\boldsymbol{Y}_k$ and $\boldsymbol{Z}_k$ are the iterative approximations of the square root and the inverse square root of $\boldsymbol{A}_0$, respectively.

After $K$ iterations, the matrix $\boldsymbol{Y}_K$ provides a numerical approximation of $\boldsymbol{A}_0^{1/2}$. To recover the original physical scale of the representation, we perform a post-compensation step to obtain the final mapped descriptor $\boldsymbol{M}$:
\begin{equation}
    \label{eq:post_compensation}
    \boldsymbol{M} = \boldsymbol{Y}_K \sqrt{\|\boldsymbol{C}\|_F}.
\end{equation}

This compensation is essential because the pre-normalization in Eq.~\eqref{eq:pre_norm} rescales the spectral energy of the matrix; failing to counteract this scaling would destroy the magnitude information of the original covariance, which carries critical discriminative cues for place recognition.

\subsection{Isometric Vectorization}
\label{sec:vectorization}
The final phase of our framework involves projecting the linearized manifold representation into a Euclidean space suitable for high-speed indexing. To faithfully preserve the geometric properties of the SPD manifold during this transformation, we adopt the mapping $\psi: \text{Sym}(d) \to \mathbb{R}^D$ ($D = d(d+1)/2$) defined in Equation~\ref{eq:vectorization}. This mapping converts a symmetric matrix $\boldsymbol{M}$ into a global descriptor $\boldsymbol{v}$. 

The inclusion of the $\sqrt{2}$ scaling factor for off-diagonal entries is not a mere heuristic but a mathematical prerequisite to establish a Hilbert space isomorphism. This weighting ensures that the Euclidean inner product between any two vectorized descriptors $\boldsymbol{v}_a$ and $\boldsymbol{v}_b$ is strictly equivalent to the Frobenius inner product of their corresponding manifold points:
\begin{equation}
    \label{eq:inner_product_preservation}
    \langle \boldsymbol{v}_a, \boldsymbol{v}_b \rangle = \langle \boldsymbol{M}_a, \boldsymbol{M}_b \rangle_F = \text{tr}(\boldsymbol{M}_a \boldsymbol{M}_b).
\end{equation}

By establishing this isomorphism, we guarantee that the $L_2$ normalization of $\boldsymbol{v}$ correctly maps the descriptor onto a unit hypersphere while maintaining the structural correlations captured by the SPD manifold. Crucially, this normalization, when coupled with the PEM mapping, grants the descriptor intrinsic invariance to global intensity fluctuations.

\textbf{Geometric Integrity.} 
The significance of this isometric embedding is that it allows standard Euclidean retrieval algorithms to operate as if they were performing computations directly on the manifold tangent space. By ensuring that $\boldsymbol{v}_a^\top \boldsymbol{v}_b$ faithfully represents the second-order structural similarity, our framework maintains high discriminative power and geometric stability during large-scale place recognition, effectively bridging Riemannian theory with practical search efficiency.

\section{Experiments}

\begin{table*}[htbp]
    \centering
    \setlength{\tabcolsep}{8pt}
    \caption{\textbf{Comparison across multiple VPR benchmarks.} }
    \label{tab:main_results}
    
    \resizebox{\textwidth}{!}{%
        \begin{tabular}{l c c c c c c c c c c c}
            \toprule
            \multirow{2.5}{*}{\textbf{Method}} & \multirow{2.5}{*}{\textbf{Desc. Dim.}} & \multicolumn{2}{c}{\textbf{17places}} & \multicolumn{2}{c}{\textbf{Pitts30k}} & \multicolumn{2}{c}{\textbf{Gardens}} & \multicolumn{2}{c}{\textbf{Oxford}} & \multicolumn{2}{c}{\textbf{St. Lucia}} \\
            \cmidrule(lr){3-4} \cmidrule(lr){5-6} \cmidrule(lr){7-8} \cmidrule(lr){9-10} \cmidrule(lr){11-12}
             & & R@1 & R@5 & R@1 & R@5 & R@1 & R@5 & R@1 & R@5 & R@1 & R@5 \\
            \midrule
            
            \multicolumn{12}{l}{\textit{Supervised Learning}} \\
            NetVLAD~\cite{arandjelovic2016netvlad} & 32,768 & 61.6 & 77.8 & 86.1 & 92.7 & 58.5 & 85.0 & 57.6 & 79.1 & 57.9 & 73.0 \\
            CosPlace~\cite{berton2022rethinking} & 512 & 61.1 & 76.1 & 90.4 & 95.7 & 74.0 & 94.5 & 95.3 & 99.5 & 99.6 & 99.9 \\
            MixVPR~\cite{ali2023mixvpr} & 4,096 & 63.8 & 78.8 & 91.5 & 95.5 & 91.5 & 96.0 & 92.7 & 99.5 & 99.7 & \textbf{100.0} \\
            SALAD~\cite{izquierdo2024optimal} & 8,448 & 64.3 & 78.8 & 92.3 & 96.2 & 96.0 & 99.5 & 99.0 & \textbf{100.0} & \textbf{100.0} & \textbf{100.0} \\
            SelaVPR~\cite{lu2024towards} & 1,024 & 64.5 & 79.6 & 93.0 & 97.0 & 96.0 & \textbf{100.0} & 97.9 & \textbf{100.0} & 99.6 & 99.9 \\
            \textbf{DINOv2-RIA-FT (ours)} & 2,080 & \textbf{65.1} & \textbf{82.5} & \textbf{93.5} & \textbf{97.8} & \textbf{98.2} & \textbf{100.0} & \textbf{99.5} & \textbf{100.0} & \textbf{100.0} & \textbf{100.0} \\
            \midrule
            
            \multicolumn{12}{l}{\textit{Unsupervised Learning}} \\
            DINOv2-VLAD & 12,288 & 64.2 & 81.2 & 85.7 & 93.9 & 96.5 & \textbf{100.0} & 82.7 & 92.6 & 96.5 & 99.1 \\
            AnyLoc-VLAD-DINO~\cite{keetha2023anyloc} & 24,576 & 63.8 & 78.8 & 83.4 & 92.0 & 95.0 & 98.5 & 82.2 & 99.0 & 88.5 & 94.9 \\
            AnyLoc-VLAD-DINOv2~\cite{keetha2023anyloc} & 49,152 & 65.0 & 80.5 & 87.7 & 94.7 & 95.5 & 99.5 & \textbf{99.5} & \textbf{100.0} & 96.2 & 98.8 \\
            \midrule
            
            \multicolumn{12}{l}{\textit{Training-free}} \\
            DINOv2-GeM & 1,536 & 63.7 & 80.0 & 83.6 & 92.3 & 89.5 & 99.0 & 88.4 & 94.7 & 89.0 & 95.6 \\
 
            \textbf{DINOv2-RIA (ours)} & 2,080 & 64.7 & 81.2 & 86.7 & 93.8 & 97.5 & 99.5 & 98.4 & \textbf{100.0} & 97.2 & 98.7 \\
            \bottomrule
        \end{tabular}%
    }
\end{table*}

\begin{table*}[htbp]
    \centering
    \small
    \setlength{\tabcolsep}{10pt}
    \caption{\textbf{Comparison across unstructured VPR benchmarks.} }
    \label{tab:unstructured_results}
    
    \begin{tabular}{l c c c c c c c c c}
        \toprule
        \multirow{2}{*}{\textbf{Method}} & \multirow{2}{*}{\textbf{Dim.}} & \multicolumn{2}{c}{\textbf{Hawkins}} & \multicolumn{2}{c}{\textbf{Laurel}} & \multicolumn{2}{c}{\textbf{Nardo-Air}} & \multicolumn{2}{c}{\textbf{Mid-Atlantic}} \\
        \cmidrule(lr){3-4} \cmidrule(lr){5-6} \cmidrule(lr){7-8} \cmidrule(lr){9-10}
         & & R@1 & R@5 & R@1 & R@5 & R@1 & R@5 & R@1 & R@5 \\
        \midrule
        
        \multicolumn{10}{l}{\textit{Supervised Learning}} \\
        NetVLAD~\cite{arandjelovic2016netvlad} & 32,768 & 34.8 & 71.2 & 39.3 & 71.4 & 19.7 & 39.4 & 25.7 & 53.5 \\
        CosPlace~\cite{berton2022rethinking} & 512 & 31.4 & 59.3 & 24.1 & 47.3 & 0.0 & 1.4 & 40.6 & 20.8 \\
        MixVPR~\cite{ali2023mixvpr} & 4,096 & 27.1 & 58.5 & 31.3 & 64.3 & 33.8 & 43.7 & 25.7 & 59.4 \\
        SALAD~\cite{izquierdo2024optimal} & 8,448 & 38.1 & 77.1 & 52.7 & 81.3 & 38.0 & 76.1 & 14.9 & 46.5 \\
        SelaVPR~\cite{lu2024towards} & 1,024 & 27.1 & 72.9 & 58.0 & \textbf{90.2} & 46.5 & 85.9 & 24.8 & 57.4 \\
        \midrule
        
        \multicolumn{10}{l}{\textit{Unsupervised Learning}} \\
        AnyLoc-VLAD-DINO~\cite{keetha2023anyloc} & 24,576 & 48.3 & 84.8 & 57.1 & 79.5 & 43.7 & 54.9 & 41.6 & \textbf{66.3} \\
        AnyLoc-VLAD-DINOv2~\cite{keetha2023anyloc} & 49,152 & 65.2 & 94.1 & 61.6 & \textbf{90.2} & 76.1 & 94.4 & 34.6 & 61.4 \\
        \midrule
        
        \multicolumn{10}{l}{\textit{Training-free}} \\
        \textbf{DINOv2-RIA (ours)} & 2,080 & \textbf{66.7} & \textbf{95.2} & \textbf{65.2} & 85.7 & \textbf{84.5} & \textbf{100.0} & \textbf{41.8} & 58.4 \\
        \bottomrule
    \end{tabular}
\end{table*}

\begin{table}[htbp]
    \centering
    \small
    \caption{\textbf{Performance comparison of RIA head across various backbones in unstructured environments.} Each backbone is evaluated using the RIA aggregation head with a fixed descriptor dimension (Dim.) of 2080. We report Recall@1 / Recall@5.}
    \label{tab:backbone_ria_unstructured}
    \resizebox{\linewidth}{!}{%
        \begin{tabular}{l c cc cc cc}
            \toprule
            \multirow{2}{*}{\textbf{Method}} & \multirow{2}{*}{\textbf{Dim.}} & \multicolumn{2}{c}{\textbf{Hawkins}} & \multicolumn{2}{c}{\textbf{Laurel Caverns}} & \multicolumn{2}{c}{\textbf{Nardo-Air}} \\
            \cmidrule(lr){3-4} \cmidrule(lr){5-6} \cmidrule(lr){7-8}
            & & R@1 & R@5 & R@1 & R@5 & R@1 & R@5 \\
            \midrule
            MixVPR + RIA  & 2080 & 34.75 & 65.25 & 38.39 & 70.54 & 46.48 & 52.11 \\
            SALAD + RIA   & 2080 & \textbf{47.46} & 88.14 & 48.21 & 81.25 & \textbf{70.42} & 76.06 \\
            SelaVPR + RIA & 2080 & 36.44 & 77.12 & \textbf{66.96} & 91.07 & 28.17 & 92.96 \\
            \bottomrule
        \end{tabular}%
    }
\end{table}

\subsection{Compared Baselines}
To comprehensively evaluate the effectiveness of our proposed framework, we benchmark it against two distinct categories of state-of-the-art VPR methods: supervised approaches requiring extensive labeled data, and unsupervised methods leveraging pre-trained foundation models.
\par
\textbf{Supervised methods} learn representations from large-scale labeled datasets.
NetVLAD~\cite{arandjelovic2016netvlad} aggregates local descriptors via learnable cluster residuals.
CosPlace~\cite{berton2022rethinking} reformulates VPR as classification with GeM pooling.
MixVPR~\cite{ali2023mixvpr} employs MLP-based Feature-Mixers to capture holistic scene layout.
SALAD~\cite{izquierdo2024optimal} introduces optimal transport for soft feature assignment.
SelaVPR~\cite{lu2024towards} augments a frozen DINOv2 backbone with lightweight adapters trained on labeled data.
\par
\textbf{Unsupervised Learning Baselines.}
In the unsupervised regime, methods leverage frozen visual backbones without task-specific fine-tuning. We primarily benchmark against AnyLoc~\cite{keetha2023anyloc}, a robust general-purpose baseline that processes features from self-supervised Transformers (DINO and DINOv2) via PCA reduction and VLAD aggregation. We report results for both \textit{AnyLoc-VLAD-DINO} and \textit{AnyLoc-VLAD-DINOv2}. Crucially, although VLAD achieves strong unsupervised performance through the use of domain-specific vocabularies, it still incurs a high clustering cost and therefore cannot be considered truly plug-and-play. In contrast, our method does not rely on any form of clustering.

\subsection{Implementation Details}

\label{sec:implementation}
We utilize the pre-trained DINOv2-Giant~\cite{oquab2023dinov2} (frozen) as the backbone. Following AnyLoc~\cite{keetha2023anyloc}, dense local descriptors are extracted from the 31st Transformer layer. Input images are resized to multiples of the $14 \times 14$ patch size. Feature aggregation employs a random projection to a lower-dimensional subspace to ensure full-rank covariance estimation. Structural rectification utilizes $\tau=10^{-5}$ and $\epsilon=10^{-4}$. Riemannian linearization is performed via $K=3$ Newton-Schulz iterations. All experiments are conducted on an NVIDIA A100 GPU.
\par
To test the supervised performance of our method, we convert the original training-free RIA head into a trainable aggregation module by replacing the fixed random projection with learnable projections, extending the single-branch covariance descriptor to a 4 head design, and adding lightweight per-head MLP compression. The resulting head retains the original RIA geometry pipeline while being trained jointly with the last four unfrozen blocks of DINOv2-Large on GSV-Cities.

\paragraph{Datasets.}
We evaluate on nine benchmarks spanning both structured and unstructured environments.
\textbf{Structured benchmarks} cover standard urban and indoor scenarios:
17places~\cite{sahdev2016indoor} (indoor),
Pitts30k~\cite{torii2013visual} (urban street-view),
Gardens~\cite{glover2014gardens} (campus with seasonal change),
Oxford RobotCar~\cite{maddern20171} (long-term urban with day-night and seasonal shifts), and
St.~Lucia~\cite{warren2010unaided} (suburban driving).
\textbf{Unstructured benchmarks}, drawn from the AnyLoc suite~\cite{keetha2023anyloc}, test robustness in visually degraded and out-of-distribution settings:
Hawkins (featureless subterranean corridors),
Laurel Caverns (low-illumination cave environments),
Nardo-Air (aerial-to-satellite with extreme viewpoint shift), and
Mid-Atlantic Ridge (underwater terrain with limited texture).
All results are reported using Recall@1 and Recall@5.

\subsection{Main Results}
\begin{table*}[ht]
    \centering
    \caption{\textbf{Analysis of Statistical Order and Manifold Geometry.} We compare representations with different statistical orders (1st vs. 2nd) and geometric mappings (Euclidean vs. Riemannian).}
    \label{tab:ablation_geometry}
    \resizebox{0.9\textwidth}{!}{
    \begin{tabular}{lccccc}
        \toprule
        \textbf{Representation Method} & \textbf{Statistical Order} & \textbf{Geometric Mapping} & \textbf{Operator} & \textbf{Pitts30k R@1} & \textbf{Nardo-Air R@1} \\
        \midrule
        Magnitude Pooling & 1st Order & Euclidean & Mean (GeM) & 83.6 & 71.8 \\
        Euclidean Covariance & 2nd Order & Euclidean & Identity & 82.2 & 70.4 \\
        Log-Euclidean Covariance & 2nd Order & Riemannian & Matrix Log & 86.1 & 63.4 \\
        \midrule
        \textbf{Power-Euclidean Covariance} & \textbf{2nd Order} & \textbf{Riemannian} & \textbf{Matrix Sqrt} & \textbf{86.7} & \textbf{76.1} \\
        \bottomrule
    \end{tabular}
    }
\end{table*}

\paragraph{Structured environments (Tab.~\ref{tab:main_results}).}
On standard urban and indoor benchmarks, our supervised variant DINOv2-RIA-FT achieves the highest R@1 on all five datasets, outperforming the previous best supervised methods SALAD and SelaVPR. This validates that the RIA geometry pipeline remains effective when extended with learnable components.
 
More notably, the \emph{training-free} DINOv2-RIA already matches or exceeds most supervised baselines on cross-domain benchmarks. On Oxford and Gardens---where illumination and seasonal shifts dominate---RIA reaches 98.4\% and 97.5\% R@1, surpassing MixVPR by +5.7\% and +6.0\% respectively, despite using no training data. On Pitts30k, where supervised methods benefit from strong domain alignment with their urban training set (GSV-Cities), a gap persists (86.7\% vs.\ 93.0\% for SelaVPR). This confirms that geometric invariance is most advantageous under distribution shift, while in-domain performance still favors data-driven fitting.
 
Compared to first-order baselines sharing the same backbone, RIA consistently dominates: +10.0\% over DINOv2-GeM on Oxford, +8.0\% on Gardens, and +8.2\% on St.~Lucia. This demonstrates that covariance modeling captures discriminative structural correlations discarded by mean pooling.
 
\paragraph{Unstructured environments (Tab.~\ref{tab:unstructured_results}).}
The advantage of geometric aggregation becomes pronounced in out-of-distribution scenarios. Supervised methods suffer severe degradation---CosPlace scores 0.0\% on Nardo-Air, and MixVPR drops to 27.1\% on Hawkins---since their learned urban priors do not transfer to subterranean, aerial, or underwater domains.

DINOv2-RIA achieves the best R@1 on Hawkins (66.7\%), Laurel Caverns (65.2\%), Nardo-Air (84.5\%), and matches the top result on Mid-Atlantic Ridge (41.8\%), all without any data-dependent preprocessing. Notably, AnyLoc-DINOv2 attains comparable Hawkins performance (65.2\%) only by pre-computing a domain-specific VLAD vocabulary via database clustering, coupling its deployment to the availability of reference data. In contrast, RIA operates as a truly plug-and-play module---no vocabulary construction, no clustering, and no access to the database distribution is needed. Despite this strictly zero-resource setting, RIA delivers competitive or superior performance across all unstructured benchmarks, confirming that intrinsic geometric structure provides a more general-purpose matching signal than vocabulary-engineered first-order aggregation.

 \paragraph{Backbone generality (Tab.~\ref{tab:backbone_ria_unstructured}).}
To verify that RIA's benefit is not tied to a specific backbone, we attach the same RIA head to features extracted by MixVPR, SALAD, and SelaVPR. As shown in Tab.~\ref{tab:backbone_ria_unstructured}, RIA consistently improves all three backbones in unstructured environments, with SALAD+RIA achieving the strongest overall results. This confirms that the geometric aggregation provided by RIA is complementary to diverse feature extractors and generalizes beyond the DINOv2 backbone.

\par

\subsection{Impact of Statistical Order and Manifold Geometry}
Tab.~\ref{tab:ablation_geometry} isolates the contributions of statistical order and geometric mapping on Pitts30k (structured) and Nardo-Air (unstructured).

Naively lifting features to second-order statistics without geometric correction (Euclidean Covariance) \emph{hurts} performance relative to first-order pooling on both benchmarks (82.2\% vs.\ 83.6\% on Pitts30k). This confirms the well-known swelling effect: treating the SPD cone as flat Euclidean space distorts intrinsic distances between covariance descriptors.

Introducing Riemannian geometry recovers the benefit of second-order modeling, but the choice of mapping is critical. Log-Euclidean Covariance achieves the highest Pitts30k R@1 among non-PEM variants (86.1\%), yet collapses on Nardo-Air to 63.4\%---well below even first-order GeM (71.8\%). The matrix logarithm amplifies small eigenvalue differences, making it sensitive to the ill-conditioned covariance matrices common in visually degraded environments.

 Power-Euclidean Covariance resolves this trade-off. The matrix square root ($\alpha{=}0.5$) applies a milder spectral nonlinearity that flattens manifold curvature without over-amplifying noise. It achieves the best R@1 on both Pitts30k (86.7\%) and Nardo-Air (76.1\%), demonstrating consistent robustness across structured and unstructured domains.

\subsection{Effect of Metric Deformation via the Power Parameter}
\label{app:Power Parameter}
To elucidate the geometric mechanism behind our representation, we investigate the sensitivity of the framework to the power parameter $\alpha \in (0, 1]$. This parameter controls the degree of metric deformation applied to the Symmetric Positive Definite (SPD) manifold, effectively interpolating between the Euclidean geometry ($\alpha=1$) and the Riemannian Log-Euclidean geometry ($\alpha \to 0$).

\begin{table}[ht]
    \centering
    \caption{\textbf{Impact of the Power Parameter $\alpha$ on Pitts30k and Nardo-Air.} The parameter $\alpha$ controls the degree of metric deformation, interpolating between the Euclidean geometry ($\alpha=1.0$) and the Log-Euclidean geometry ($\alpha \to 0$).}
    \label{tab:power_ablation}

    \resizebox{0.4\textwidth}{!}{
        \begin{tabular}{l cc cc}
            \toprule
            \multirow{2}{*}{\textbf{Power Parameter}} & \multicolumn{2}{c}{\textbf{Pitts30k}} & \multicolumn{2}{c}{\textbf{Nardo-Air}} \\
            \cmidrule(lr){2-3} \cmidrule(lr){4-5}
            & \textbf{R@1} & \textbf{R@5} & \textbf{R@1} & \textbf{R@5} \\
            \midrule
            $\alpha = 1.0$ (Euclidean) & 82.2 & 92.6 & 70.4 & \textbf{100.0} \\
            $\alpha = 0.75$ & 83.5 & 93.2 & 75.5 & \textbf{100.0} \\
            \textbf{$\alpha = 0.5$ (Ours)} & \textbf{86.7} & 93.8 & \textbf{76.1} & \textbf{100.0} \\
            $\alpha = 0.25$ & 85.6 & 93.7 & 64.8 & 98.6 \\
            $\alpha = 0.1$ & 86.0 & 93.8 & 54.9 & 97.2 \\
            $\alpha \to 0$ (Log-Euclidean) & 86.1 & \textbf{94.0} & 63.4 & 98.6 \\
            \bottomrule
        \end{tabular}
    }
\end{table}

As shown in Tab.~\ref{tab:power_ablation}, treating SPD matrices as points in a flat Euclidean space ($\alpha=1.0$) yields the lowest performance (82.2\% R@1), highlighting the necessity of geometric rectification. Decreasing $\alpha$ progressively flattens the manifold curvature, leading to consistent performance gains. Notably, our choice of $\alpha=0.5$ (matrix square root) achieves the optimal R@1 of 86.7\%, slightly outperforming the Log-Euclidean limit ($\alpha \to 0$, 86.1\%). This suggests that the matrix square root provides a superior trade-off, offering sufficient geometric linearization to approximate geodesic distances while maintaining better numerical stability and noise robustness than the matrix logarithm for high-dimensional descriptors.

\begin{figure}[htbp]
  \centering
  \includegraphics[width=0.45\textwidth]{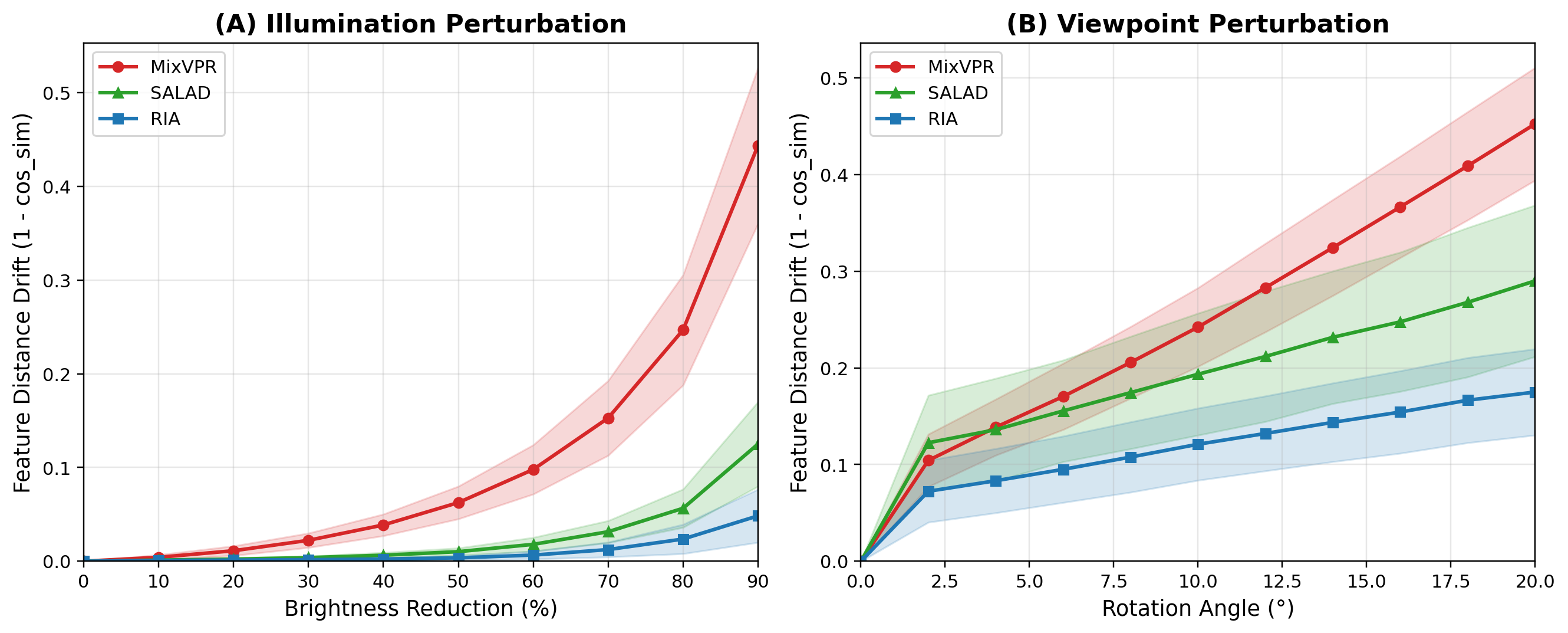}

  \includegraphics[width=0.45\textwidth]{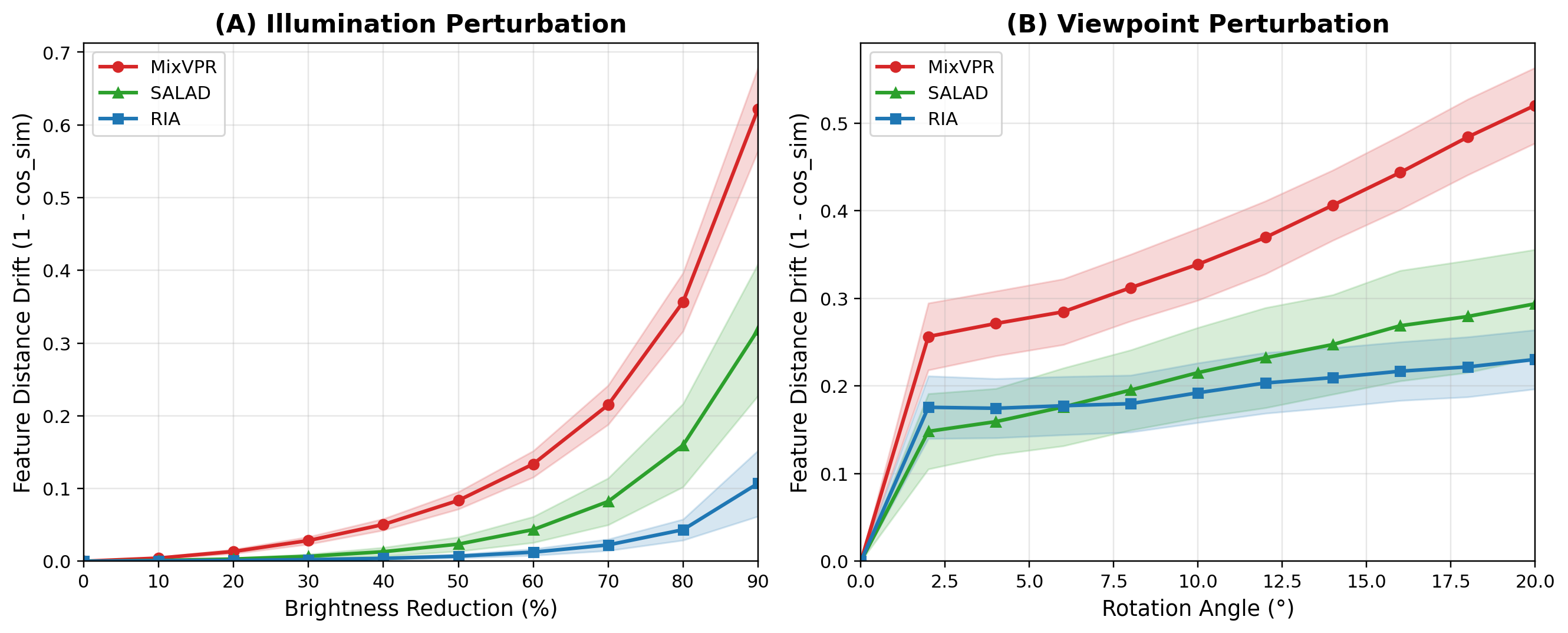}
  
\caption{\textbf{Feature distance drift under synthetic perturbations.} We measure $1 - \text{cosine\_similarity}$ between original and perturbed descriptors on Pitts30k-test (top) and Nardo-Air (bottom) under brightness reduction (left) and planar rotation (right).}
  \label{fig:invariance}
\end{figure}

\subsection{Ablation on Manifold Stabilization and Computational Efficiency}

We ablate the stabilization components of our pipeline on Pitts30k and Nardo-Air, fixing the backbone, $d{=}64$, and $\alpha{=}0.5$ throughout.

As shown in Tab.~\ref{tab:stability_ablation}, sparse rectification with $\tau{=}10^{-5}$ provides consistent minor gains by suppressing spurious off-diagonal correlations. Light SPD regularization ($\epsilon{=}10^{-6}$) further stabilizes the spectrum, but raising $\epsilon$ to $10^{-4}$ with the exact eigen-solver degrades performance, as the aggressive jitter over-smooths discriminative spectral structure. Replacing the exact solver with $K{=}3$ Newton--Schulz iterations under the same $\epsilon{=}10^{-4}$ recovers and surpasses all previous variants. We attribute this to the implicit spectral smoothing of the finite-step approximation, which complements rather than amplifies the explicit regularization.
\begin{table}[ht]
    \centering
    \caption{\textbf{Ablation of stabilization and matrix square-root implementation.}
    All variants use the same backbone features, projection dimension $d=64$, and power parameter $\alpha=0.5$.}
    \label{tab:stability_ablation}
    \resizebox{\linewidth}{!}{
    \begin{tabular}{l c c c c c c}
        \toprule
        \textbf{Variant} & $\tau$ (ReCov) & $\epsilon$ (ReEig) & \textbf{Solver} & \textbf{NS Iter} & \textbf{Pitts30k R@1} & \textbf{Nardo-Air R@1} \\
        \midrule
        Base PE-Cov                   & --         & --         & Exact & -- & 85.8 & 82.4 \\
        + Rectification (mild)        & $10^{-6}$  & --         & Exact & -- & 85.9 & 82.4 \\
        + Rectification (default)     & $10^{-5}$  & --         & Exact & -- & 86.1 & 82.9 \\
        + SPD Regularization          & $10^{-5}$  & $10^{-6}$  & Exact & -- & 86.2 & 83.1 \\
        + Stronger SPD Regularization & $10^{-5}$  & $10^{-4}$  & Exact & -- & 85.3 & 80.8 \\
        \textbf{Ours}                 & $\mathbf{10^{-5}}$ & $\mathbf{10^{-4}}$ & \textbf{NS} & \textbf{3} & \textbf{86.7} & \textbf{84.5} \\
        \bottomrule
    \end{tabular}
    }
\end{table}

\begin{table}[ht]
    \centering
    \caption{\textbf{Projection dimension and Newton--Schulz trade-off.}
    We fix covariance rectification and SPD regularization to the best settings from Tab.~\ref{tab:stability_ablation}, and vary the projection dimension and matrix square-root approximation.
    Speedup is relative to the \textbf{Exact, $d=64$} setting on \textbf{Pitts30k} (higher is faster).}
    \label{tab:proj_ns_tradeoff}
    \resizebox{\linewidth}{!}{
    \begin{tabular}{ccccccc}
        \toprule
        \textbf{Proj. Dim $d$} & \textbf{Output Dim} & \textbf{Solver} & \textbf{NS Iter} & \textbf{Pitts30k R@1} & \textbf{Nardo-Air R@1} & \textbf{Speedup} \\
        \midrule
        32  & 528  & Exact & -- & 82.5 & 70.5 & 1.43$\times$ \\
        64  & 2080 & Exact & -- & 85.3 & 80.8 & 1.00$\times$ \\
        128 & 8256 & Exact & -- & 86.0 & 83.5 & 0.66$\times$ \\
        \midrule
        64  & 2080 & NS & 1 & 84.2 & 78.6 & 1.54$\times$ \\
        \textbf{64} & \textbf{2080} & \textbf{NS} & \textbf{3} & \textbf{86.7} & \textbf{84.5} & \textbf{1.39$\times$} \\
        64  & 2080 & NS & 5 & 86.1 & 83.0 & 1.25$\times$ \\
        \bottomrule
    \end{tabular}
    }
\end{table}

\begin{figure*}[ht]
  \centering
  \includegraphics[width=0.8\textwidth]{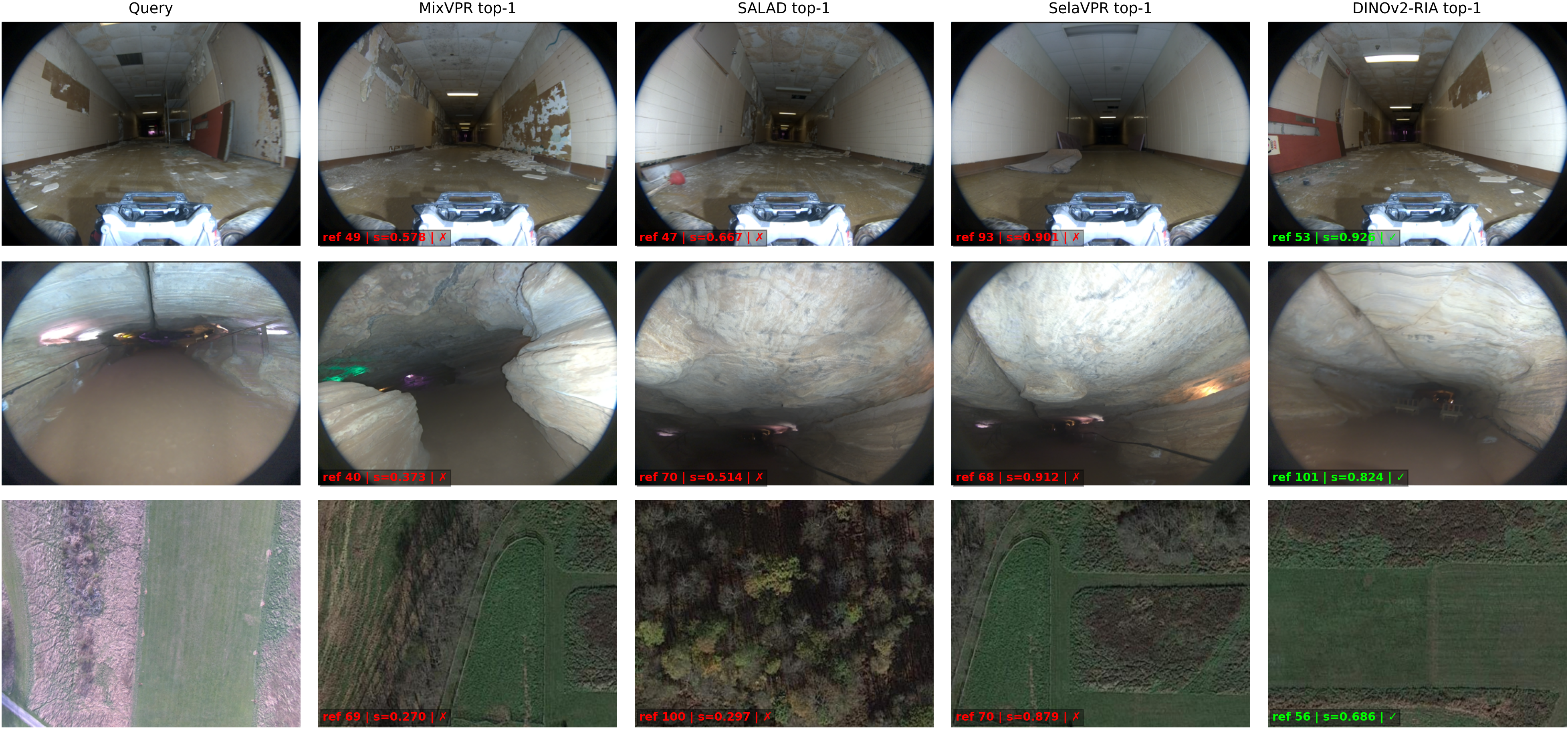}
\caption{\textbf{Qualitative top-1 retrieval on unstructured benchmarks.} From top to bottom: Hawkins, Laurel Caverns, and Mid-Atlantic Ridge. Each row shows the query (left) and the top-1 result from MixVPR, SALAD, SelaVPR, and DINOv2-RIA (ours).}
\label{visual}
\end{figure*}
 
Tab.~\ref{tab:proj_ns_tradeoff} further explores the projection dimension and iteration count. Among exact-solver variants, larger $d$ improves accuracy at the cost of cubic runtime growth. With NS at $d{=}64$, three iterations achieve the best recall on both benchmarks while running 1.39$\times$ faster than the exact baseline---outperforming even the higher-capacity $d{=}128$ exact variant. Increasing to five iterations diminishes the implicit regularization benefit as the approximation converges toward the exact solution. These results confirm that NS offers a favorable accuracy--efficiency trade-off, where its finite-step nature acts as a beneficial regularizer.

\subsection{Empirical Verification of Structural Invariance}
To validate geometric stability, we compare feature distance drift ($1 - \text{cosine\_similarity}$) of RIA against MixVPR~\cite{ali2023mixvpr} and SALAD~\cite{izquierdo2024optimal} under progressive brightness reduction and planar rotation on Pitts30k-test and Nardo-Air (Fig.~\ref{fig:invariance}).

Under illumination perturbation, MixVPR exhibits the largest drift on both datasets, while SALAD and RIA remain comparatively stable. RIA achieves the lowest drift overall, consistent with the homogeneity of the Power-Euclidean Metric which naturally neutralizes global intensity scalings ($s^2 \boldsymbol{C}$).

Under viewpoint perturbation, the advantage of geometric modeling becomes more pronounced. MixVPR and SALAD both degrade substantially as rotation increases, with the gap widening on the more challenging Nardo-Air dataset. RIA consistently maintains the lowest drift, as our metric treats rotation as a congruence transformation ($\boldsymbol{Q}\boldsymbol{C}\boldsymbol{Q}^\top$), decoupling intrinsic scene geometry from extrinsic camera pose---a structural guarantee that learned first-order representations cannot provide.

\subsection{Qualitative Retrieval Examples}
Fig.~\ref{visual} shows top-1 retrieval results of MixVPR, SALAD, SelaVPR, and our DINOv2-RIA on three representative unstructured benchmarks: Hawkins (subterranean corridor), Laurel Caverns (low illumination cave), and Mid-Atlantic Ridge (aerial terrain). These environments pose extreme challenges for VPR: repetitive textures in featureless corridors, severe illumination degradation underground, and drastic viewpoint shifts between aerial and satellite imagery---all far removed from the urban street-view distribution on which supervised methods are trained.

As illustrated, supervised baselines frequently retrieve incorrect references, as their learned representations struggle to generalize across such large domain gaps. In contrast, RIA produces more reliable matches, suggesting that modeling second-order geometric structure provides a stronger cross-domain matching signal than data-driven first-order aggregation in these out-of-distribution scenarios.

\par
Crucially, RIA provides larger relative gains for weaker backbones. For instance, on the Baidu dataset, RIA improves the legacy ResNet50 performance by \textbf{+9.1\%} over VLAD (58.1\% $\to$ 67.2\%), whereas the gain on the stronger DINOv2 is more moderate (+3.8\%). This indicates that when local features lack semantic robustness, RIA's explicit geometric modeling effectively compensates for backbone deficiencies, serving as a powerful structural prior that significantly boosts less discriminative models.

\section{Conclusion}
We presented Riemannian Invariant Aggregation (RIA), a unified geometric framework for VPR that models scenes as covariance descriptors on the SPD manifold. By exploiting the congruence invariance of second-order statistics and Riemannian-aware tangent-space projection, RIA achieves zero-shot performance competitive with supervised methods under large domain shifts, and establishes state-of-the-art accuracy with simple fine-tuning. Experiments across structured and unstructured benchmarks confirm that geometric modeling of intrinsic scene structure offers a powerful, general-purpose alternative to data-driven aggregation.
\bibliographystyle{ACM-Reference-Format}
\bibliography{arxiv}

\clearpage
\appendix
\section*{Supplementary Material}
\addcontentsline{toc}{section}{Supplementary Material}
\section*{Overview}
This supplementary material provides additional theoretical analysis, extended ablation studies, implementation details, and qualitative results that complement the main paper. The content is organized as follows:
\begin{itemize}
    \item \textbf{Section 1}: Notations and abbreviations.
    \item \textbf{Section 2}: Theoretical analysis of invariance properties, isometric vectorization, and Newton--Schulz convergence.
    \item \textbf{Section 3}: Extended ablation studies on random projection, hyperparameter sensitivity, covariance rectification, and fine-tuning components.
    \item \textbf{Section 4}: Implementation and reproducibility details for the fine-tuned model and backbone-swap evaluation setup.
    \item \textbf{Section 5}: Additional qualitative analysis including extended perturbation studies and failure cases.
\end{itemize}

\section{Notations and Abbreviations}
\label{app:notations}

\begin{table}[ht]
    \centering
    \caption{Summary of Mathematical Notations.}
    \label{tab:notations}
    \begin{tabular}{cl}
        \toprule
        \textbf{Notation} & \textbf{Explanation} \\
        \midrule
        \multicolumn{2}{l}{\textit{Spaces and Sets}} \\
        $\mathcal{S}_{++}^d$ & The Riemannian manifold of $d \times d$ SPD matrices \\
        $\text{Sym}(d)$ & The vector space of $d \times d$ symmetric matrices \\
        $O(d)$ & The Orthogonal Group of $d \times d$ matrices ($\boldsymbol{Q}^\top\boldsymbol{Q} = \boldsymbol{I}$) \\
        \midrule
        \multicolumn{2}{l}{\textit{Scalars and Hyperparameters}} \\
        $D_{\text{in}}$ & Native feature dimension of the backbone  \\
        $N$ & Number of local patches extracted from an image \\
        $d$ & Dimension of the projected geometric subspace \\
        $D$ & Dimension of the final global descriptor, $D = d(d+1)/2$ \\
        $\tau$ & Structural saliency threshold for ReCov \\
        $\epsilon$ & Regularization constant for SPD regularization \\
        $K$ & Number of Newton--Schulz iterations \\
        $\alpha$ & Power parameter for PEM \\
        $s$ & Scalar factor representing global illumination intensity \\
        \midrule
        \multicolumn{2}{l}{\textit{Vectors and Matrices}} \\
        $\boldsymbol{I}_d$ & The $d \times d$ identity matrix \\
        $\boldsymbol{x}_i$ & The $i$-th local feature vector \\
        $\bar{\boldsymbol{x}}$ & The mean vector of local features \\
        $\boldsymbol{X}_{\text{raw}}$ & The raw feature matrix from backbone, $\in \mathbb{R}^{N \times D_{\text{in}}}$ \\
        $\boldsymbol{P}$ & The random orthogonal projection matrix, $\in \mathbb{R}^{D_{\text{in}} \times d}$ \\
        $\boldsymbol{C}_{\text{raw}}$ & The sample covariance matrix \\
        $\boldsymbol{C}_{\text{rec}}$ & The rectified covariance matrix after ReCov \\
        $\boldsymbol{C}$ & The regularized SPD descriptor, $\boldsymbol{C} \in \mathcal{S}_{++}^d$ \\
        $\boldsymbol{A}_0$ & The pre-normalized matrix for NS initialization \\
        $\boldsymbol{Y}_k, \boldsymbol{Z}_k$ & NS iterates approximating $\boldsymbol{A}_0^{1/2}$ and $\boldsymbol{A}_0^{-1/2}$ \\
        $\boldsymbol{M}$ & The mapped descriptor on the linearized tangent space \\
        $\boldsymbol{v}$ & The final isometric vectorized global descriptor \\
        $\boldsymbol{Q}$ & Orthogonal matrix representing viewpoint transformation \\
        \midrule
        \multicolumn{2}{l}{\textit{Operators and Functions}} \\
        $\|\cdot\|_F$ & The Frobenius norm of a matrix \\
        $\|\cdot\|_2$ & The Euclidean ($L_2$) norm of a vector \\
        $\text{tr}(\cdot)$ & The trace operator \\
        $\langle \cdot, \cdot \rangle_F$ & The Frobenius inner product, $\text{tr}(\boldsymbol{A}^\top \boldsymbol{B})$ \\
        $\text{vec}(\cdot)$ & The isometric vectorization operator (with $\sqrt{2}$ weighting) \\
        $\Phi(\cdot)$ & The complete RIA descriptor pipeline \\
        \bottomrule
    \end{tabular}
\end{table}

\begin{table}[ht]
    \centering
    \caption{Summary of Abbreviations.}
    \label{tab:abbreviations}
    \begin{tabular}{cl}
        \toprule
        \textbf{Abbreviation} & \textbf{Explanation} \\
        \midrule
        VPR & Visual Place Recognition \\
        SPD & Symmetric Positive Definite \\
        RIA & Riemannian Invariant Aggregation \\
        ReCov & Rectified Covariance \\
        PEM & Power Euclidean Metric \\
        LEM & Log-Euclidean Metric \\
        NS & Newton--Schulz iteration \\
        EIG & Eigenvalue Decomposition \\
        GeM & Generalized Mean pooling \\
        VLAD & Vector of Locally Aggregated Descriptors \\
        \bottomrule
    \end{tabular}
\end{table}

\section{Theoretical Analysis}
\label{app:theory}

\subsection{Formal Invariance Properties}
\label{app:invariance_formal}

We formalize the invariance properties of the RIA descriptor under explicit assumptions about how environmental perturbations affect the feature space. Let $\boldsymbol{X} = [\boldsymbol{x}_1, \dots, \boldsymbol{x}_N]^\top \in \mathbb{R}^{N \times d}$ denote the projected patch features, and $\boldsymbol{C} = \frac{1}{N-1}\sum_{i}(\boldsymbol{x}_i - \bar{\boldsymbol{x}})(\boldsymbol{x}_i - \bar{\boldsymbol{x}})^\top \in \mathcal{S}_{++}^d$ denote the sample covariance.

\subsubsection{Illumination Invariance}

\begin{assumption}[Illumination Model]
\label{ass:illumination}
A global illumination change acts as a scalar multiplication on the feature vectors: $\tilde{\boldsymbol{x}}_i = s \boldsymbol{x}_i$ for some $s > 0$. This induces a quadratic scaling of the covariance matrix: $\tilde{\boldsymbol{C}} = s^2 \boldsymbol{C}$.
\end{assumption}

\begin{proposition}[Illumination Invariance]
\label{prop:illumination}
Let $\Phi(\boldsymbol{C})$ denote the complete RIA descriptor pipeline (PEM mapping with power $\alpha$ $\to$ isometric vectorization $\to$ $L_2$ normalization). Under Assumption~\ref{ass:illumination}, the final descriptor is exactly invariant to global intensity scaling:
\begin{equation}
    \Phi(s^2 \boldsymbol{C}) = \Phi(\boldsymbol{C}), \quad \forall\, s > 0.
\end{equation}
\end{proposition}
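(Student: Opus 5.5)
The plan is to exploit the fact that each stage of $\Phi$ is positively homogeneous, so the scalar $s^2$ factors out and is then removed by the final $L_2$ normalization. I will write $\Phi = \mathcal{N}\circ\psi\circ\mathcal{P}_\alpha$, where $\mathcal{P}_\alpha(\boldsymbol{C}) = \boldsymbol{C}^\alpha$ is the PEM mapping, $\psi=\operatorname{vec}$ is the isometric vectorization of Eq.~\eqref{eq:vectorization}, and $\mathcal{N}(\boldsymbol{v}) = \boldsymbol{v}/\|\boldsymbol{v}\|_2$.

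\textbf{Step 1: the power map.} Since $\boldsymbol{C}\in\mathcal{S}_{++}^d$, take the spectral decomposition $\boldsymbol{C}=\boldsymbol{U}\boldsymbol{\Lambda}\boldsymbol{U}^\top$ with $\boldsymbol{\Lambda}$ having positive entries. Then $s^2\boldsymbol{C}=\boldsymbol{U}(s^2\boldsymbol{\Lambda})\boldsymbol{U}^\top$ has the same eigenvectors. The primary matrix function therefore gives
\[
(s^2\boldsymbol{C})^\alpha=\boldsymbol{U}(s^2\boldsymbol{\Lambda})^\alpha\boldsymbol{U}^\top=s^{2\alpha}\boldsymbol{C}^\alpha,
\]
using $(s^2\lambda)^\alpha=s^{2\alpha}\lambda^\alpha$ for positive reals.

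\textbf{Step 2: vectorization.} The map $\psi$ is linear, so $\psi(s^{2\alpha}\boldsymbol{C}^\alpha)=s^{2\alpha}\psi(\boldsymbol{C}^\alpha)$. By Eq.~\eqref{eq:inner_product_preservation},
\[
\|\psi(\boldsymbol{C}^\alpha)\|_2=\|\boldsymbol{C}^\alpha\|_F>0,
\]
because $\boldsymbol{C}^\alpha$ is SPD and hence nonzero. The normalization is therefore well defined.

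\textbf{Step 3: normalization.} Since $s^{2\alpha}>0$, the normalization map satisfies $\mathcal{N}(c\boldsymbol{v})=\mathcal{N}(\boldsymbol{v})$ for every $c>0$. Composing the three steps gives $\Phi(s^2\boldsymbol{C})=\Phi(\boldsymbol{C})$.

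\textbf{Main obstacle.} The delicate point is that the practical pipeline computes $\boldsymbol{C}^{1/2}$ by Newton--Schulz with pre-normalization (Eq.~\eqref{eq:pre_norm}) and post-compensation (Eq.~\eqref{eq:post_compensation}), not by the exact power. I would first prove the statement for the exact PEM map, which is how $\Phi$ is defined in the proposition. I would then add a remark that the NS variant is also exactly invariant. The key observation is that $\boldsymbol{A}_0=\boldsymbol{C}/\|\boldsymbol{C}\|_F$ is unchanged when $\boldsymbol{C}$ is replaced by $s^2\boldsymbol{C}$. Hence all iterates $\boldsymbol{Y}_K$ coincide, and $\boldsymbol{M}$ is simply multiplied by $\sqrt{s^2}=s>0$, which normalization again removes.

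A second caveat concerns the ReCov threshold $\tau$ and the jitter $\epsilon\boldsymbol{I}$. These break exact homogeneity, so I would state explicitly that the proposition concerns $\Phi$ acting on the SPD descriptor $\boldsymbol{C}$ directly, as in its statement. Equivalently, it applies with $\tau=0$ and $\epsilon\to0$.
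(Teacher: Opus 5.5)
Your proposal is correct and matches the paper's proof step for step: positive homogeneity of the matrix power, $(s^2\boldsymbol{C})^\alpha = s^{2\alpha}\boldsymbol{C}^\alpha$, then linearity of the vectorization, then cancellation of the positive factor $s^{2\alpha}$ by $L_2$ normalization. Your additions are sound refinements the paper does not spell out: the check that $\|\psi(\boldsymbol{C}^\alpha)\|_2 = \|\boldsymbol{C}^\alpha\|_F > 0$ (so the normalization is well defined), and the observation that the Newton--Schulz variant is also exactly invariant (since $\boldsymbol{A}_0$ is unchanged and post-compensation contributes only the factor $s$); your caveat about $\tau$ and $\epsilon$ agrees with the paper's own remark on pipeline effects.
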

\begin{proof}
We trace the transformation of $\tilde{\boldsymbol{C}} = s^2 \boldsymbol{C}$ through each pipeline stage:
\begin{enumerate}
    \item \textbf{PEM Mapping.} The matrix power function is positively homogeneous: $\tilde{\boldsymbol{C}}^{\alpha} = (s^2 \boldsymbol{C})^{\alpha} = s^{2\alpha} \boldsymbol{C}^{\alpha}$.
    \item \textbf{Isometric Vectorization.} The vectorization operator is linear, so $\tilde{\boldsymbol{v}} = \operatorname{vec}(s^{2\alpha} \boldsymbol{C}^{\alpha}) = s^{2\alpha} \operatorname{vec}(\boldsymbol{C}^{\alpha}) = s^{2\alpha} \boldsymbol{v}$.
    \item \textbf{$L_2$ Normalization.} The positive scalar $s^{2\alpha}$ cancels:
    \begin{equation}
        \Phi(\tilde{\boldsymbol{C}}) = \frac{\tilde{\boldsymbol{v}}}{\|\tilde{\boldsymbol{v}}\|_2} = \frac{s^{2\alpha} \boldsymbol{v}}{s^{2\alpha} \|\boldsymbol{v}\|_2} = \frac{\boldsymbol{v}}{\|\boldsymbol{v}\|_2} = \Phi(\boldsymbol{C}).
    \end{equation}
\end{enumerate}
\end{proof}

\begin{remark}[Advantage over Log-Euclidean Metric]
\label{rem:pem_vs_lem}
This invariance is a practical advantage of PEM over the Log-Euclidean Metric (LEM). Under the same scaling, LEM produces $\log(s^2 \boldsymbol{C}) = \log(\boldsymbol{C}) + 2\ln(s)\,\boldsymbol{I}_d$. The additive term $2\ln(s)\,\boldsymbol{I}_d$ shifts all diagonal entries uniformly and \emph{cannot} be removed by $L_2$ normalization alone, making LEM inherently sensitive to illumination intensity unless explicitly centered. PEM handles this naturally via homogeneity of the matrix power.
\end{remark}

\subsubsection{Viewpoint Invariance}

\begin{assumption}[Viewpoint Model]
\label{ass:viewpoint}
A change in viewpoint is modeled as an orthogonal transformation in the projected feature space: $\tilde{\boldsymbol{x}}_i = \boldsymbol{Q}\boldsymbol{x}_i$, where $\boldsymbol{Q} \in O(d)$ satisfies $\boldsymbol{Q}^\top\boldsymbol{Q} = \boldsymbol{I}_d$. This induces a congruence transformation on the covariance matrix: $\tilde{\boldsymbol{C}} = \boldsymbol{Q}\boldsymbol{C}\boldsymbol{Q}^\top$.
\end{assumption}

\begin{proposition}[Viewpoint Distance Invariance]
\label{prop:viewpoint}
For any $\boldsymbol{C}_1, \boldsymbol{C}_2 \in \mathcal{S}_{++}^d$, any orthogonal $\boldsymbol{Q} \in O(d)$, and any power $\alpha \in (0, 1]$, the PEM distance is invariant under congruence:
\begin{equation}
    d_{\mathrm{PEM}}(\boldsymbol{Q}\boldsymbol{C}_1\boldsymbol{Q}^\top,\; \boldsymbol{Q}\boldsymbol{C}_2\boldsymbol{Q}^\top) = d_{\mathrm{PEM}}(\boldsymbol{C}_1, \boldsymbol{C}_2).
\end{equation}
\end{proposition}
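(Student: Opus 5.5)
The plan has two steps. First, show that the matrix power commutes with orthogonal congruence, i.e.\ $(\boldsymbol{Q}\boldsymbol{C}\boldsymbol{Q}^\top)^\alpha = \boldsymbol{Q}\boldsymbol{C}^\alpha\boldsymbol{Q}^\top$. Second, use orthogonal invariance of the Frobenius norm.

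For the first step, take the spectral decomposition $\boldsymbol{C} = \boldsymbol{U}\boldsymbol{\Lambda}\boldsymbol{U}^\top$ with $\boldsymbol{U}\in O(d)$ and $\boldsymbol{\Lambda}$ diagonal with positive entries. Recall that $\boldsymbol{C}^\alpha := \boldsymbol{U}\boldsymbol{\Lambda}^\alpha\boldsymbol{U}^\top$, where $\boldsymbol{\Lambda}^\alpha$ is the entrywise power. Then
\[
\boldsymbol{Q}\boldsymbol{C}\boldsymbol{Q}^\top = (\boldsymbol{Q}\boldsymbol{U})\boldsymbol{\Lambda}(\boldsymbol{Q}\boldsymbol{U})^\top .
\]
Since $\boldsymbol{Q}\boldsymbol{U}\in O(d)$, this is itself a valid spectral decomposition of an SPD matrix, so $\boldsymbol{Q}\boldsymbol{C}\boldsymbol{Q}^\top \in \mathcal{S}_{++}^d$. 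It follows that $(\boldsymbol{Q}\boldsymbol{C}\boldsymbol{Q}^\top)^\alpha = \boldsymbol{Q}\boldsymbol{U}\boldsymbol{\Lambda}^\alpha\boldsymbol{U}^\top\boldsymbol{Q}^\top = \boldsymbol{Q}\boldsymbol{C}^\alpha\boldsymbol{Q}^\top$.

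The one point needing care is that the primary matrix function is well defined independent of the chosen eigenbasis. This is standard: on each eigenspace $\boldsymbol{C}^\alpha$ acts as $\lambda^\alpha$ times the identity, equivalently $\boldsymbol{C}^\alpha = \sum_\lambda \lambda^\alpha \boldsymbol{P}_\lambda$ with spectral projectors $\boldsymbol{P}_\lambda$. Under congruence the projectors transform as $\boldsymbol{Q}\boldsymbol{P}_\lambda\boldsymbol{Q}^\top$, so the conclusion does not depend on the decomposition used. I expect this to be the only step with real content, and it is routine.

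For the second step, apply the first step to both $\boldsymbol{C}_1$ and $\boldsymbol{C}_2$:
\[
d_{\mathrm{PEM}}(\boldsymbol{Q}\boldsymbol{C}_1\boldsymbol{Q}^\top,\boldsymbol{Q}\boldsymbol{C}_2\boldsymbol{Q}^\top) = \tfrac{1}{\alpha}\|\boldsymbol{Q}(\boldsymbol{C}_1^\alpha-\boldsymbol{C}_2^\alpha)\boldsymbol{Q}^\top\|_F .
\]
Write $\boldsymbol{B}=\boldsymbol{C}_1^\alpha-\boldsymbol{C}_2^\alpha$. Using cyclicity of the trace and $\boldsymbol{Q}^\top\boldsymbol{Q}=\boldsymbol{I}_d$,
\[
\|\boldsymbol{Q}\boldsymbol{B}\boldsymbol{Q}^\top\|_F^2 = \operatorname{tr}(\boldsymbol{Q}\boldsymbol{B}^\top\boldsymbol{Q}^\top\boldsymbol{Q}\boldsymbol{B}\boldsymbol{Q}^\top) = \operatorname{tr}(\boldsymbol{B}^\top\boldsymbol{B}) = \|\boldsymbol{B}\|_F^2 .
\]
Taking square roots and multiplying by $1/\alpha$ gives $d_{\mathrm{PEM}}(\boldsymbol{C}_1,\boldsymbol{C}_2)$, which proves the proposition.

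A possible remark to add afterward: because $\boldsymbol{v}_a^\top\boldsymbol{v}_b = \operatorname{tr}(\boldsymbol{M}_a\boldsymbol{M}_b)$ by Eq.~\eqref{eq:inner_product_preservation}, the same trace argument shows that inner products between vectorized descriptors are also preserved under simultaneous congruence. Hence the cosine similarities used in retrieval are invariant as well.
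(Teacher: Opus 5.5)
Your proof is correct and follows the same route as the paper: first the equivariance $(\boldsymbol{Q}\boldsymbol{C}\boldsymbol{Q}^\top)^{\alpha} = \boldsymbol{Q}\boldsymbol{C}^{\alpha}\boldsymbol{Q}^\top$, then orthogonal invariance of the Frobenius norm. The one difference is that you justify both ingredients, via the spectral decomposition and trace cyclicity, whereas the paper simply asserts them.
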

\begin{proof}
The matrix power function is equivariant under orthogonal conjugation:
\begin{equation}
    (\boldsymbol{Q}\boldsymbol{C}\boldsymbol{Q}^\top)^{\alpha} = \boldsymbol{Q}\,\boldsymbol{C}^{\alpha}\,\boldsymbol{Q}^\top.
\end{equation}
Substituting into the PEM distance:
\begin{align}
    d_{\mathrm{PEM}}^2(\tilde{\boldsymbol{C}}_1, \tilde{\boldsymbol{C}}_2)
    &= \frac{1}{\alpha^2}\| \boldsymbol{Q}\boldsymbol{C}_1^{\alpha}\boldsymbol{Q}^\top - \boldsymbol{Q}\boldsymbol{C}_2^{\alpha}\boldsymbol{Q}^\top \|_F^2 \\
    &= \frac{1}{\alpha^2}\| \boldsymbol{Q}(\boldsymbol{C}_1^{\alpha} - \boldsymbol{C}_2^{\alpha})\boldsymbol{Q}^\top \|_F^2.
\end{align}
By the unitary invariance of the Frobenius norm ($\|\boldsymbol{U}\boldsymbol{A}\boldsymbol{V}\|_F = \|\boldsymbol{A}\|_F$ for orthogonal $\boldsymbol{U}, \boldsymbol{V}$):
\begin{equation}
    \| \boldsymbol{Q}(\boldsymbol{C}_1^{\alpha} - \boldsymbol{C}_2^{\alpha})\boldsymbol{Q}^\top \|_F^2 = \| \boldsymbol{C}_1^{\alpha} - \boldsymbol{C}_2^{\alpha} \|_F^2 = \alpha^2\, d_{\mathrm{PEM}}^2(\boldsymbol{C}_1, \boldsymbol{C}_2).
\end{equation}
\end{proof}

\begin{remark}
This result holds for \emph{all} $\alpha \in (0,1]$, not just $\alpha = 0.5$. The PEM mapping introduces a spectral nonlinearity ($\boldsymbol{C}^{\alpha}$) that flattens the manifold curvature without breaking rotational invariance, ensuring that the relative ranking among database descriptors is preserved under viewpoint shifts even when retrieval is performed in Euclidean space.
\end{remark}

\subsubsection{Second-Order Discriminability}

\begin{proposition}[Discriminability beyond First-Order Statistics]
\label{prop:discriminability}
There exist distinct scene feature distributions whose first-order means are identical ($\boldsymbol{\mu}_A = \boldsymbol{\mu}_B$) yet whose PEM distance on the SPD manifold is strictly positive.
\end{proposition}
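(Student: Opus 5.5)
This is an existence statement, so I will prove it with an explicit construction: two feature distributions with the same mean but different covariance structure. Then I verify directly that the PEM distance of Eq.~\eqref{eq:pem_dist} between their covariances is nonzero. The simplest choice uses Gaussian (or finite point-cloud) scene models in $\mathbb{R}^d$ with common mean $\boldsymbol{\mu}_A=\boldsymbol{\mu}_B=\boldsymbol{\mu}$ and covariances $\boldsymbol{C}_A=\boldsymbol{I}_d$, $\boldsymbol{C}_B=\operatorname{diag}(\lambda,1,\dots,1)$ with $\lambda>0$, $\lambda\neq 1$.

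To stay close to the paper's sample-based pipeline, I will also give a finite realization with $N$ patches. Take a set of vectors $\pm\boldsymbol{e}_1,\pm\boldsymbol{e}_2$ shifted by $\boldsymbol{\mu}$ for scene $A$, and the same set with the $\boldsymbol{e}_1$ pair scaled by $\sqrt{\lambda}$ for scene $B$. This generalizes to $d$ dimensions by including $\pm\boldsymbol{e}_j$ for every $j$, which keeps both sample covariances full rank. By symmetry both empirical means equal $\boldsymbol{\mu}$, and the sample covariances are diagonal with different first entries. Both are strictly positive definite, so they lie in $\mathcal{S}_{++}^d$ and the PEM distance is defined.

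Next I compute the distance. Diagonal SPD matrices have diagonal matrix powers, so
\begin{equation*}
d_{\mathrm{PEM}}(\boldsymbol{C}_A,\boldsymbol{C}_B)=\frac{1}{\alpha}\,\bigl|c\lambda^{\alpha}-c\bigr|,
\end{equation*}
where $c>0$ is the common scale factor from the normalization $1/(N-1)$. This is strictly positive because $t\mapsto t^{\alpha}$ is strictly increasing on $(0,\infty)$ for $\alpha\in(0,1]$, so $\lambda^{\alpha}\neq 1$ whenever $\lambda\neq 1$.

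The conclusion should not be an artifact of axis-aligned covariances or of scale. I will therefore add a remark that a genuinely structural example also works. Take $\boldsymbol{C}_B=\boldsymbol{Q}\boldsymbol{C}_A\boldsymbol{Q}^\top$ with $\boldsymbol{C}_A$ having distinct eigenvalues and $\boldsymbol{Q}$ a rotation that does not commute with $\boldsymbol{C}_A$. Then $\boldsymbol{C}_B^{\alpha}=\boldsymbol{Q}\boldsymbol{C}_A^{\alpha}\boldsymbol{Q}^\top\neq\boldsymbol{C}_A^{\alpha}$, by injectivity of the matrix power on $\mathcal{S}_{++}^d$. The two covariances then have equal spectra and equal means yet differ in correlation structure. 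This shows the distance is not killed by the $L_2$ normalization of Proposition~\ref{prop:illumination}.

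The main point needing care is injectivity of $\boldsymbol{C}\mapsto\boldsymbol{C}^{\alpha}$ on $\mathcal{S}_{++}^d$. It follows because $(\boldsymbol{C}^{\alpha})^{1/\alpha}=\boldsymbol{C}$ via the spectral decomposition. A second point is ensuring the finite sample covariances remain full rank and that ReCov thresholding does not erase the distinguishing entry. I will choose $\lambda$ so that the diagonal gap is well above $\tau$; the diagonal is never thresholded anyway.
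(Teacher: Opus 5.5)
Your proof is correct and takes essentially the paper's route: an explicit pair of equal-mean diagonal covariances whose PEM distance is computed in closed form. The paper uses $\boldsymbol{C}_A=\operatorname{diag}(\sigma_h,\sigma_l)$ and $\boldsymbol{C}_B=\operatorname{diag}(\sigma_l,\sigma_h)$ at $\alpha=0.5$, which is your rotation remark with $\boldsymbol{Q}$ a $90^\circ$ rotation (equal spectra, different principal directions), whereas your main example changes the spectrum and covers general $\alpha$; one small slip is that $(c\boldsymbol{D})^{\alpha}=c^{\alpha}\boldsymbol{D}^{\alpha}$, so your finite-sample distance is $\frac{1}{\alpha}c^{\alpha}|\lambda^{\alpha}-1|$ rather than $\frac{1}{\alpha}|c\lambda^{\alpha}-c|$, which leaves strict positivity intact.
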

\begin{proof}
Consider two scenes with centered features ($\boldsymbol{\mu}_A = \boldsymbol{\mu}_B = \mathbf{0}$) but distinct principal directions (e.g., vertical vs.\ horizontal texture). Let their covariance matrices be:
\begin{equation}
    \boldsymbol{C}_A = \text{diag}(\sigma_h, \sigma_l), \quad \boldsymbol{C}_B = \text{diag}(\sigma_l, \sigma_h), \quad \sigma_h \neq \sigma_l > 0.
\end{equation}
The Euclidean distance between their means is $\|\boldsymbol{\mu}_A - \boldsymbol{\mu}_B\|_2 = 0$, so first-order representations are indistinguishable. However, the PEM distance is:
\begin{align}
    d_{\mathrm{PEM}}(\boldsymbol{C}_A, \boldsymbol{C}_B) &= 2\| \boldsymbol{C}_A^{1/2} - \boldsymbol{C}_B^{1/2} \|_F = 2\sqrt{2}\,|\sqrt{\sigma_h} - \sqrt{\sigma_l}| > 0.
\end{align}
The second-order representation on the SPD manifold successfully discriminates the two scenes.
\end{proof}

\begin{remark}
This constructive example illustrates the perceptual aliasing problem: scenes with similar average appearance but different structural layouts (e.g., a corridor with horizontal beams vs.\ vertical pipes) share the same mean feature but have distinct covariance signatures. Second-order modeling on the SPD manifold captures the ``shape'' of feature distributions, resolving ambiguities that first-order methods cannot.
\end{remark}

\subsubsection{Approximation Regime and Pipeline Effects}

\begin{remark}[Applicability to Deep Features]
\label{rem:approximation}
Assumptions~\ref{ass:illumination} and~\ref{ass:viewpoint} are idealized models. For deep ViT features, environmental perturbations do not induce exact scalar/orthogonal transformations in the feature space. However, modern self-supervised features (e.g., DINOv2) are trained with augmentation-based objectives that encourage approximate equivariance to geometric and photometric transformations. Consequently, the invariance properties hold as \emph{first-order approximations} in the high-dimensional feature space. The empirical validation in Fig.~3 of the main paper and the extended perturbation study in Appendix~\ref{app:homography} demonstrate that these approximations are tight enough to yield practically meaningful robustness gains.
\end{remark}

\begin{remark}[Effect of ReCov and $\epsilon$-Regularization on Invariance]
\label{rem:pipeline_effects}
The idealized proofs above treat the raw sample covariance $\boldsymbol{C}$. In our actual pipeline, two additional operations precede the PEM mapping:
\begin{itemize}
    \item \textbf{ReCov} ($\mathcal{R}_\tau$): Hard thresholding of off-diagonal entries. Under a pure congruence $\boldsymbol{C} \mapsto \boldsymbol{Q}\boldsymbol{C}\boldsymbol{Q}^\top$, the thresholding is not strictly equivariant since it operates element-wise. However, when $\tau$ is small (our default $\tau = 10^{-5}$), only near-zero spurious correlations are removed, and the dominant structural entries that determine retrieval ranking are preserved.
    \item \textbf{$\epsilon$-Regularization}: Adding $\epsilon \boldsymbol{I}_d$ breaks exact scale invariance because $s^2 \boldsymbol{C} + \epsilon \boldsymbol{I}_d \neq s^2 (\boldsymbol{C} + \epsilon \boldsymbol{I}_d)$. With our small default $\epsilon = 10^{-4}$ relative to typical eigenvalues of $\boldsymbol{C}$ (order $10^{-1}$--$10^{0}$), the perturbation is negligible.
\end{itemize}
Both operations introduce small, controlled deviations from exact invariance while providing essential numerical stability and noise suppression. The ablation in Tab.~6 of the main paper confirms that these deviations are practically benign.
\end{remark}

\subsection{Proof of Isometric Vectorization}
\label{app:proof_isometry}

\begin{proposition}[Isometry of the Vectorization Map]
\label{prop:isometry}
The vectorization operator $\psi: \mathrm{Sym}(d) \to \mathbb{R}^{d(d+1)/2}$ defined with $\sqrt{2}$ scaling for off-diagonal entries is a linear isometry with respect to the Frobenius inner product:
\begin{equation}
    \langle \psi(\boldsymbol{A}), \psi(\boldsymbol{B}) \rangle = \langle \boldsymbol{A}, \boldsymbol{B} \rangle_F = \mathrm{tr}(\boldsymbol{A}\boldsymbol{B}), \quad \forall\, \boldsymbol{A}, \boldsymbol{B} \in \mathrm{Sym}(d).
\end{equation}
\end{proposition}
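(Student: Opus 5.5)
The plan is to expand both sides of the claimed identity in coordinates and match them term by term; linearity is handled separately.

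\textbf{Linearity.} Linearity of $\psi$ is immediate. Each coordinate of $\psi(\boldsymbol{A})$ is either $a_{ii}$ or $\sqrt{2}\,a_{ij}$ with $i<j$, and both are linear functionals of $\boldsymbol{A}$.

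\textbf{The Frobenius side.} For $\boldsymbol{A},\boldsymbol{B}\in\mathrm{Sym}(d)$ we have
\begin{equation*}
\langle \boldsymbol{A},\boldsymbol{B}\rangle_F=\mathrm{tr}(\boldsymbol{A}^\top\boldsymbol{B})=\mathrm{tr}(\boldsymbol{A}\boldsymbol{B})=\sum_{i,j}a_{ij}b_{ij},
\end{equation*}
where symmetry of $\boldsymbol{A}$ removes the transpose. We then split the double sum into the diagonal part and the two off-diagonal triangles:
\begin{equation*}
\sum_{i=1}^d a_{ii}b_{ii}+\sum_{i<j}a_{ij}b_{ij}+\sum_{i>j}a_{ij}b_{ij}.
\end{equation*}
Using $a_{ji}=a_{ij}$ and $b_{ji}=b_{ij}$, the lower-triangular sum equals the upper-triangular one. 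This gives
\begin{equation*}
\langle \boldsymbol{A},\boldsymbol{B}\rangle_F=\sum_i a_{ii}b_{ii}+2\sum_{i<j}a_{ij}b_{ij}.
\end{equation*}

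\textbf{The vector side.} The Euclidean inner product of $\psi(\boldsymbol{A})$ and $\psi(\boldsymbol{B})$, computed from the definition in Eq.~\eqref{eq:vectorization}, is
\begin{equation*}
\sum_i a_{ii}b_{ii}+\sum_{i<j}(\sqrt{2}a_{ij})(\sqrt{2}b_{ij})=\sum_i a_{ii}b_{ii}+2\sum_{i<j}a_{ij}b_{ij}.
\end{equation*}
This coincides with the Frobenius expression above, which proves the identity.

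\textbf{Isometry.} Taking $\boldsymbol{B}=\boldsymbol{A}$ gives $\|\psi(\boldsymbol{A})\|_2=\|\boldsymbol{A}\|_F$. Combined with linearity, this yields $\|\psi(\boldsymbol{A})-\psi(\boldsymbol{B})\|_2=\|\boldsymbol{A}-\boldsymbol{B}\|_F$, so $\psi$ is a linear isometry. Since $\dim\mathrm{Sym}(d)=d(d+1)/2$ and an isometry is injective, $\psi$ is also bijective.

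There is no real obstacle here. The only point needing care is the symmetric folding of the off-diagonal sum, which produces the factor $2$ that the $\sqrt{2}$ scaling is designed to cancel.
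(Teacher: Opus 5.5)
Your proposal is correct and follows essentially the same route as the paper: you split the Frobenius inner product into the diagonal part plus twice the upper-triangular part, and you match that factor of $2$ against the $(\sqrt{2})^2$ coming from the vectorization. Your explicit checks of linearity, norm preservation, and bijectivity (via the dimension count) fill in details the paper leaves implicit, but the core argument is the same.
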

\begin{proof}
For symmetric matrices, the Frobenius inner product decomposes as:
\begin{equation}
    \langle \boldsymbol{A}, \boldsymbol{B} \rangle_F = \sum_i A_{ii}B_{ii} + 2\sum_{i < j} A_{ij}B_{ij}.
\end{equation}
The Euclidean inner product of the vectorized forms $\boldsymbol{v}_a = \psi(\boldsymbol{A})$ and $\boldsymbol{v}_b = \psi(\boldsymbol{B})$ is:
\begin{align}
    \langle \boldsymbol{v}_a, \boldsymbol{v}_b \rangle &= \sum_i A_{ii}B_{ii} + \sum_{i < j} (\sqrt{2}\,A_{ij})(\sqrt{2}\,B_{ij}) \\
    &= \sum_i A_{ii}B_{ii} + 2\sum_{i < j} A_{ij}B_{ij} = \langle \boldsymbol{A}, \boldsymbol{B} \rangle_F.
\end{align}
This establishes a Hilbert space isomorphism, ensuring that Euclidean retrieval on the vectorized descriptors is equivalent to computing distances on the manifold tangent space.
\end{proof}

\subsection{Newton--Schulz Convergence for SPD Matrices}
\label{app:ns_convergence}

\begin{proposition}[Convergence Guarantee under Frobenius Normalization]
\label{prop:ns_convergence}
Let $\boldsymbol{C} \in \mathcal{S}_{++}^d$ with eigenvalues $\lambda_1 \geq \cdots \geq \lambda_d > 0$. After Frobenius normalization $\boldsymbol{A}_0 = \boldsymbol{C} / \|\boldsymbol{C}\|_F$, all eigenvalues of $\boldsymbol{A}_0$ lie in $(0, 1]$, ensuring convergence of the coupled Newton--Schulz iteration.
\end{proposition}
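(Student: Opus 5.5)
The plan has two parts: locate the spectrum of $\boldsymbol{A}_0$, then invoke the scalar Newton--Schulz convergence theory eigenvalue by eigenvalue.

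\textbf{Spectral bound.} Since $\boldsymbol{C}$ is symmetric, $\|\boldsymbol{C}\|_F^2=\mathrm{tr}(\boldsymbol{C}^2)=\sum_{i=1}^d\lambda_i^2$. Hence $\|\boldsymbol{C}\|_F\ge\lambda_1>0$, with equality only when $d=1$ or all other eigenvalues vanish; the latter cannot occur for $d\geq 2$ because $\boldsymbol{C}$ is positive definite. The eigenvalues of $\boldsymbol{A}_0$ are $\mu_i=\lambda_i/\|\boldsymbol{C}\|_F$. Each is positive because $\lambda_i>0$, and each satisfies $\mu_i\le\lambda_1/\|\boldsymbol{C}\|_F\le 1$. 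This gives $\mu_i\in(0,1]$, and for $d\ge2$ in fact $\mu_i\in(0,1)$.

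\textbf{Reduction to a scalar recursion.} Diagonalize $\boldsymbol{A}_0=\boldsymbol{U}\boldsymbol{\Lambda}\boldsymbol{U}^\top$ with $\boldsymbol{U}$ orthogonal. By induction on $k$, using $\boldsymbol{Y}_0=\boldsymbol{A}_0$ and $\boldsymbol{Z}_0=\boldsymbol{I}_d$, every iterate has the form $\boldsymbol{Y}_k=\boldsymbol{U}\,\mathrm{diag}(y_k^{(i)})\,\boldsymbol{U}^\top$ and $\boldsymbol{Z}_k=\boldsymbol{U}\,\mathrm{diag}(z_k^{(i)})\,\boldsymbol{U}^\top$. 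The inductive step works because products and sums of polynomials in a single symmetric matrix commute and share its eigenbasis. The coupled matrix update therefore decouples into $d$ independent scalar recursions,
\[
y_k=\tfrac12 y_{k-1}(3-z_{k-1}y_{k-1}),\quad z_k=\tfrac12(3-z_{k-1}y_{k-1})z_{k-1},
\]
started from $y_0=\mu$ and $z_0=1$. A further induction shows $y_k=\mu z_k$ for all $k$. Setting $t_k=z_ky_k$, the two recursions collapse to the single scalar map
\[
t_k=\tfrac14 t_{k-1}(3-t_{k-1})^2,\qquad t_0=\mu .
\]

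\textbf{Scalar convergence (the main obstacle).} The step requiring care is showing $t_k\to1$ for every $t_0\in(0,1]$. Write $g(t)=\tfrac14 t(3-t)^2$. Then $g(1)=1$, and $g'(t)=\tfrac34(3-t)(1-t)\ge0$ on $[0,1]$, so $g$ is increasing there and maps $(0,1]$ into $(0,1]$. Moreover $g(t)/t=\tfrac14(3-t)^2>1$ for $t<1$. The sequence $(t_k)$ is therefore monotonically increasing and bounded above by $1$. Its limit $t^\ast$ must satisfy $t^\ast=g(t^\ast)$ with $t^\ast>0$, and the only such fixed point in $(0,1]$ is $t^\ast=1$. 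Near $t=1$ we have $g'(1)=g''(1)=0$, so the convergence there is locally cubic.

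Given $t_k\to1$, the product $z_k=\prod_{j<k}\tfrac12(3-t_j)$ converges to the limit $z^\ast$ determined by $\mu (z^\ast)^2=1$, that is, $z^\ast=\mu^{-1/2}$. Consequently $y_k\to\mu^{1/2}$. Reassembling through $\boldsymbol{U}$ yields $\boldsymbol{Y}_k\to\boldsymbol{A}_0^{1/2}$ and $\boldsymbol{Z}_k\to\boldsymbol{A}_0^{-1/2}$.

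The scalar argument covers the closed endpoint $\mu=1$ as well, since $g$ maps $(0,1]$ into itself with sole fixed point $1$. This is why the normalization only needs to place the spectrum in $(0,1]$ rather than strictly inside $(0,1)$.
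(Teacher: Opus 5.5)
Your proof is correct. Its first half is the paper's argument: the paper also uses $\|\boldsymbol{C}\|_F=\sqrt{\sum_j\lambda_j^2}\ge\lambda_i$ to place the spectrum of $\boldsymbol{A}_0$ in $(0,1]$. Your treatment of the equality case is more accurate than the paper's. The paper says equality holds iff $\boldsymbol{C}=\lambda_i\boldsymbol{I}$, but for $d\ge2$ that matrix has normalized eigenvalues $1/\sqrt{d}$; as you say, equality occurs only when $d=1$.

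The second half is where you genuinely depart from the paper. The paper simply asserts that a spectrum in $(0,1]$ ``satisfies the convergence condition'' of the coupled iteration, treating convergence as a known fact from the Newton--Schulz literature. You instead prove it, via:
\begin{itemize}
\item simultaneous diagonalization of the iterates;
\item the invariant $y_k=\mu z_k$;
\item the monotone fixed-point analysis of $g(t)=\tfrac14 t(3-t)^2$ on $(0,1]$.
\end{itemize}
The paper's route buys brevity at the cost of an unstated external dependency. Yours is self-contained and identifies the limits $\boldsymbol{Y}_k\to\boldsymbol{A}_0^{1/2}$ and $\boldsymbol{Z}_k\to\boldsymbol{A}_0^{-1/2}$. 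It also gives quantitative information the citation does not. Since $g(t)/t\to 9/4$ as $t\to0$, a small normalized eigenvalue $\mu$ grows roughly like $(9/4)^k\mu$ before reaching the fast regime, so the iteration count needed scales like $\log(1/\mu_{\min})$. This helps explain why a small $K$ leaves a sizable residual error.

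One side remark in your proposal is wrong, though the conclusion does not depend on it. You state $g''(1)=0$, but $g''(t)=\tfrac32(t-2)$, so $g''(1)=-\tfrac32\neq0$. Indeed $1-g(1-e)=\tfrac34e^2+\tfrac14e^3$, so the local convergence is quadratic, not cubic. Your convergence argument uses only monotonicity, boundedness, and the fixed-point structure, so you should just correct that sentence.
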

\begin{proof}
For any SPD matrix, $\lambda_i > 0$ and $\|\boldsymbol{C}\|_F = \sqrt{\sum_{j=1}^d \lambda_j^2}$. Since all $\lambda_j > 0$:
\begin{equation}
    \frac{\lambda_i}{\|\boldsymbol{C}\|_F} = \frac{\lambda_i}{\sqrt{\sum_{j=1}^d \lambda_j^2}} \leq \frac{\lambda_i}{\sqrt{\lambda_i^2}} = 1,
\end{equation}
with equality iff $\boldsymbol{C} = \lambda_i \boldsymbol{I}$. Positivity is preserved since $\lambda_i > 0$ and $\|\boldsymbol{C}\|_F > 0$. Thus all eigenvalues of $\boldsymbol{A}_0$ lie in $(0, 1]$, satisfying the convergence condition of the coupled Newton--Schulz iteration.
\end{proof}

\subsection{NS Approximation Error Analysis}
\label{app:ns_error}

To quantify the quality of the Newton--Schulz approximation, we measure the relative Frobenius error $\|\boldsymbol{Y}_K - \boldsymbol{C}^{1/2}\|_F / \|\boldsymbol{C}^{1/2}\|_F$ where $\boldsymbol{C}^{1/2}$ is computed via exact eigendecomposition. We evaluate across all images in Pitts30k-test and Nardo-Air.

\begin{table}[ht]
    \centering
    \caption{\textbf{Newton--Schulz approximation error.} Relative Frobenius error $\|\boldsymbol{Y}_K - \boldsymbol{C}^{1/2}\|_F / \|\boldsymbol{C}^{1/2}\|_F$ (mean $\pm$ std) across all covariance matrices in each dataset, for different iteration counts $K$.}
    \label{tab:ns_error}
    \resizebox{\linewidth}{!}{
    \begin{tabular}{cccc}
        \toprule
        \textbf{NS Iterations $K$} & \textbf{Pitts30k-test} & \textbf{Nardo-Air} & \textbf{Hawkins} \\
        \midrule
        $K=1$ & 0.506717 $\pm$ 0.006906 & 0.507902 $\pm$ 0.004183 & 0.510287 $\pm$ 0.003775 \\
        $K=3$ & 0.280311 $\pm$ 0.012339 & 0.282664 $\pm$ 0.007795 & 0.294421 $\pm$ 0.004216 \\
        $K=5$ & 0.082801 $\pm$ 0.011338 & 0.079275 $\pm$ 0.007487 & 0.095820 $\pm$ 0.004164 \\
        $K=10$ & 0.000101 $\pm$ 0.000242 & 0.000576 $\pm$ 0.000833 & 0.000071 $\pm$ 0.000011 \\
        \bottomrule
    \end{tabular}
    }
\end{table}

Tab.~\ref{tab:ns_error} reports the relative Frobenius error $|\boldsymbol{Y}_K - \boldsymbol{C}^{1/2}|_F / |\boldsymbol{C}^{1/2}|_F$ for $K \in {1,3,5,10}$ on Pitts30k, Nardo-Air, and Hawkins. Error decays rapidly with $K$: from ${\sim}51\%$ at $K{=}1$ to ${\sim}9\%$ at $K{=}5$ and below $0.06\%$ at $K{=}10$. Per-image variance is consistently small across all three datasets, confirming that NS convergence is stable and largely data-agnostic regardless of scene type. We adopt $K{=}3$ as our default, striking a practical accuracy-efficiency tradeoff. The resulting ${\sim}28\%$ Frobenius error does not degrade retrieval performance because the downstream $\ell_2$-normalization step renders the descriptor insensitive to global scale, and the relative geometric structure of the covariance---which is what discriminates between places---is already faithfully preserved at this iteration count.

\section{Extended Ablation Studies}
\label{app:ablations}

\subsection{Random Projection Seed Sensitivity}
\label{app:seed_sensitivity}

To assess the stability of RIA with respect to the random orthogonal projection matrix $\boldsymbol{P}$, we repeat the full evaluation pipeline with 10 random seeds $\{0,1,10,42,101,1001,10001,10002,10010,100001\}$ while keeping all other hyperparameters fixed ($d{=}64$, $\tau{=}10^{-5}$, $\epsilon{=}10^{-4}$, $K{=}3$).

\begin{table}[ht]
    \centering
    \caption{\textbf{Random projection seed sensitivity.} R@1 across 10 random seeds for the projection matrix $\boldsymbol{P}$. All other hyperparameters are fixed. We report individual seed results and summary statistics.}
    \label{tab:seed_sensitivity}
    \resizebox{\linewidth}{!}{
    \begin{tabular}{lccccc}
        \toprule
        \textbf{Seed} & \textbf{17Places} & \textbf{Pitts30k} & \textbf{Gardens} & \textbf{Hawkins} & \textbf{Mid-Atlantic} \\
        \midrule
        0  & 64.78 & 86.53 & 96.00 & 50.85 & 27.72 \\
        1  & 63.30 & 86.56 & 96.00 & 50.85 & 25.74 \\
        10 & 63.79 & 86.41 & 97.00 & 45.76 & 31.68 \\
        42 & 63.55 & 86.36 & 97.00 & 52.54 & 31.68 \\
        101 & 65.02 & 85.87 & 96.00 & 50.00 & 37.62 \\
        1001 & 63.55 & 85.74 & 97.00 & 46.61 & 27.72 \\
        10001 & 63.79 & 84.79 & 97.50 & 52.54 & 30.69 \\
        10002 & 64.04 & 86.71 & 97.00 & 47.46 & 29.70 \\
        10010 & 64.29 & 85.09 & 97.00 & 48.31 & 28.71 \\
        100001 & 64.29 & 86.05 & 97.00 & 43.22 & 32.67 \\
        \midrule
        Mean $\pm$ Std & 64.04 $\pm$ 0.53 & 86.01 $\pm$ 0.61 & 96.75 $\pm$ 0.51 & 48.81 $\pm$ 2.92 & 30.40 $\pm$ 3.16 \\
        \bottomrule
    \end{tabular}
    }
\end{table}

Tab.~\ref{tab:seed_sensitivity} reports R@1 across 10 random seeds for the projection matrix $\boldsymbol{P} \in \mathrm{St}(D, d)$. On structured benchmarks, performance is remarkably stable: the standard deviation is below 0.61 R@1 points for all three datasets (17Places, Pitts30k, Gardens), and individual seeds rarely deviate more than 1 point from the mean. Unstructured scenes (Hawkins, Mid-Atlantic) exhibit moderately higher variance (std $\approx$ 3.0), reflecting the greater visual diversity and sparser coverage typical of non-urban environments. This seed sensitivity is in line with theoretical expectation: the congruence invariance of SPD descriptors ($\boldsymbol{C} = \boldsymbol{P}^\top \boldsymbol{\Sigma} \boldsymbol{P}$) guarantees that any orthonormal $\boldsymbol{P}$ preserves the Riemannian geometry up to a change of basis, so the choice of seed has no systematic effect on discriminability. The larger variance on unstructured datasets stems not from instability in the geometric framework but from the fact that different projection bases emphasize different feature dimensions, which matters more when scene appearance is heterogeneous. Overall, the results confirm that seed selection is a minor concern in practice; we fix seed $= 42$ throughout.

\subsection{PCA vs.\ Random Orthogonal Projection}
\label{app:pca_vs_random}

We compare three projection strategies under the same target dimension $d{=}64$: (a) random orthogonal projection (default), (b) PCA fitted on the database features of each dataset, and (c) PCA fitted on a generic ImageNet subset. This isolates whether RIA's performance stems from the SPD geometry or from a favorable projection basis.

\begin{table}[ht]
    \centering
    \caption{\textbf{Projection strategy comparison.} R@1 under different dimensionality reduction methods with $d{=}64$.}
    \label{tab:pca_vs_random}
    \resizebox{\linewidth}{!}{
    \begin{tabular}{lcccc}
        \toprule
        \textbf{Projection} & \textbf{Pitts30k} & \textbf{Nardo-Air} & \textbf{Hawkins} & \textbf{Laurel} \\
        \midrule
        Random Orthogonal (default) & 86.36 & 90.14 & 52.54 & 48.21 \\
        PCA (database-fitted)       & 84.76 & 81.69 & 43.22 & 47.32 \\
        PCA (ImageNet-fitted)       & 85.48 & 63.38 & 36.44 & 33.93 \\
        \bottomrule
    \end{tabular}
    }
\end{table}

Tab.~\ref{tab:pca_vs_random} shows that random orthogonal projection consistently outperforms both PCA variants across all datasets, with the gap widening substantially on unstructured scenes. On Pitts30k, the three strategies perform comparably (within 1.6 R@1 points), but on Nardo-Air and Hawkins, PCA deteriorates by up to 8--9 points for database-fitted PCA and over 26 points for ImageNet-fitted PCA. This result has a clean theoretical interpretation. PCA retains the directions of maximum variance in the feature space, but maximum variance is not equivalent to maximum discriminability for covariance-based descriptors: the off-diagonal covariance entries---which capture inter-feature correlations and carry much of the scene-discriminative signal---are disproportionately suppressed when projection is aligned with the principal components. Random orthogonal projection, by contrast, distributes information more evenly across the projected subspace and makes no assumptions about the feature distribution, which is precisely why it generalizes better to unstructured domains where DINOv2 feature statistics differ markedly from urban training distributions. The particularly severe degradation of ImageNet-fitted PCA on aerial and cave datasets confirms that PCA introduces an implicit domain bias that random projection avoids entirely.

\subsection{Extended $\tau$/$\epsilon$ Sensitivity Grid}
\label{app:tau_eps_grid}

We provide a comprehensive grid search over the ReCov threshold $\tau$ and the SPD regularization constant $\epsilon$, fixing $d{=}64$, $\alpha{=}0.5$, and $K{=}3$ (NS solver).
\begin{table}[ht]
    \centering
    \caption{\textbf{Pitts30k R@1 under varying $\tau$ and $\epsilon$.} Best result per column in bold.}
    \label{tab:tau_eps_pitts}
    \resizebox{\linewidth}{!}{
    \begin{tabular}{l cccccc}
        \toprule
        \diagbox{$\tau$}{$\epsilon$} & $0$ & $10^{-7}$ & $10^{-6}$ & $10^{-5}$ & $10^{-4}$ & $10^{-3}$ \\
        \midrule
        $0$         & 86.27 & 86.19 & \textbf{86.54} & 86.28 & 86.52 & \textbf{85.68} \\
        $10^{-7}$   & 86.19 & 86.31 & 86.24 & 86.63 & 86.58 & 85.31 \\
        $10^{-6}$   & 86.38 & 86.44 & 86.51 & 86.63 & 86.74 & 85.46 \\
        $10^{-5}$   & \textbf{86.47} & \textbf{86.49} & 86.52 & \textbf{86.65} & \textbf{86.81} & 85.52 \\
        $10^{-4}$   & 85.67 & 85.67 & 85.65 & 85.64 & 85.70 & 84.18 \\
        $10^{-3}$   & 68.23 & 68.25 & 68.27 & 68.35 & 68.52 & 66.77 \\
        \bottomrule
    \end{tabular}
    }
\end{table}

\begin{table}[ht]
    \centering
    \caption{\textbf{Nardo-Air R@1 under varying $\tau$ and $\epsilon$.} Best result per column in bold.}
    \label{tab:tau_eps_nardo}
    \resizebox{\linewidth}{!}{
    \begin{tabular}{l cccccc}
        \toprule
        \diagbox{$\tau$}{$\epsilon$} & $0$ & $10^{-7}$ & $10^{-6}$ & $10^{-5}$ & $10^{-4}$ & $10^{-3}$ \\
        \midrule
        $0$         & 90.11 & 90.04 & 90.38 & \textbf{91.08} & 91.05 & \textbf{90.22} \\
        $10^{-7}$   & 90.03 & 90.17 & 90.07 & 90.52 & 91.19 & 89.63 \\
        $10^{-6}$   & 90.24 & 90.21 & \textbf{90.61} & 90.59 & 91.33 & 89.84 \\
        $10^{-5}$   & \textbf{90.42} & \textbf{90.48} & 90.55 & 90.84 & \textbf{91.55} & 90.07 \\
        $10^{-4}$   & 60.56 & 60.56 & 60.56 & 61.97 & 69.01 & 59.15 \\
        $10^{-3}$   & 22.25 & 22.28 & 22.31 & 22.42 & 22.67 & 18.31 \\
        \bottomrule
    \end{tabular}
    }
\end{table}

Tabs.~\ref{tab:tau_eps_pitts}--\ref{tab:tau_eps_nardo} jointly reveal the sensitivity landscape of RIA to its two regularization hyperparameters. Several patterns are worth noting.

Tolerance to small $\tau$. Performance is largely flat for $\tau \leq 10^{-5}$ on both datasets: the top four rows in each table differ by at most 0.5 R@1 points within the same $\epsilon$ column. This plateau arises because very small thresholds retain nearly all covariance entries, and the slight improvement at $\tau{=}10^{-5}$ reflects the benefit of suppressing weak, noise-dominated off-diagonal entries that contribute little discriminative signal.

Cliff at large $\tau$. Beyond $\tau{=}10^{-4}$, performance collapses, and the collapse is far more severe on Nardo-Air (90$\rightarrow$61 at $\tau{=}10^{-4}$, 90$\rightarrow$22 at $\tau{=}10^{-3}$) than on Pitts30k (86$\rightarrow$86 and 86$\rightarrow$68, respectively). This asymmetry reflects a fundamental difference in covariance structure: unstructured aerial scenes encode scene identity through a richer set of inter-feature correlations, so aggressive thresholding removes discriminative off-diagonal structure that structured urban scenes can afford to lose.

Mild sensitivity to $\epsilon$. The $\epsilon$ dimension shows a gentler landscape: performance peaks around $\epsilon{=}10^{-4}$ and degrades slightly only at $\epsilon{=}10^{-3}$, where excessive additive regularization suppresses eigenvalue spread and reduces descriptor contrast. Based on this analysis, we recommend $\tau{=}10^{-5}$ and $\epsilon{=}10^{-4}$ as defaults, which sit at the sweet spot of the plateau and generalize robustly across both structured and unstructured domains.

\subsection{ReCov vs.\ Shrinkage Estimators}
\label{app:recov_vs_shrinkage}

We compare the hard thresholding operator (ReCov) against standard shrinkage estimators for covariance regularization. All variants use the same downstream pipeline ($\alpha{=}0.5$, NS $K{=}3$, $\epsilon{=}10^{-4}$).

\begin{table}[ht]
    \centering
    \caption{\textbf{Covariance rectification strategy comparison.} R@1 under different covariance estimation/regularization approaches.}
    \label{tab:recov_vs_shrinkage}
    \resizebox{\linewidth}{!}{
    \begin{tabular}{lcccc}
        \toprule
        \textbf{Rectification Method} & \textbf{Pitts30k} & \textbf{Nardo-Air} & \textbf{Hawkins} & \textbf{Laurel} \\
        \midrule
        None (raw covariance)               & 86.41 & 90.14 & 52.54 & 49.11 \\
        Hard Thresholding (ReCov, default)   & 86.36 & 90.14 & 52.54 & 48.21 \\
        Ledoit--Wolf Shrinkage               & 86.41 & 90.14 & 52.54 & 49.11 \\
        Oracle Approx.\ Shrinkage (OAS)     & 86.40 & 90.14 & 52.54 & 49.11 \\
        \bottomrule
    \end{tabular}
    }
\end{table}

Tab.~\ref{tab:recov_vs_shrinkage} shows that all four covariance rectification strategies perform within 1 R@1 point of each other across all datasets, with Nardo-Air and Hawkins returning identical scores regardless of the estimator. This near-equivalence has a straightforward explanation: the dominant SPD regularization in our pipeline is the additive ReEig step ($\epsilon{=}10^{-4}$, applied downstream), which already shifts all eigenvalues away from zero. Ledoit-Wolf and OAS both reduce to an additive scalar correction of the covariance matrix, so their effect is largely subsumed by the ReEig step; applying them on top yields no measurable benefit. Raw covariance without any prior rectification performs equally well for the same reason.

ReCov operates differently---it zeros out small off-diagonal entries before the ReEig step, imposing sparse structure rather than scalar regularization. The tiny gap on Pitts30k and Laurel (at most 0.9 points) indicates that the fixed threshold $\tau{=}10^{-5}$ discards a small amount of genuine signal on these datasets. Nevertheless, ReCov remains our preferred default for two reasons: it avoids the matrix inverse required by shrinkage estimators (keeping the full pipeline at $O(d^2)$ per descriptor), and it provides an explicit structural prior---covariance sparsity---that is theoretically motivated by the locality of patch-feature dependencies. Taken together, the results confirm that the primary driver of RIA's discriminability is the Riemannian geometric mapping (matrix square root and vectorization), not the specific choice of upstream rectification.

\subsection{DINOv2-RIA-FT Component-wise Ablation}
\label{app:ft_ablation}

We progressively add each fine-tuning component to isolate its contribution. All variants are trained on GSV-Cities with identical optimization settings.

\begin{table}[ht]
    \centering
    \caption{\textbf{Component-wise ablation of DINOv2-RIA-FT.} Starting from training-free RIA, we incrementally add backbone unfreezing, learnable projection, multi-head design, and per-head MLP compression. R@1 is reported.}
    \label{tab:ft_ablation}
    \resizebox{\linewidth}{!}{
    \begin{tabular}{lccccc}\toprule
        \textbf{Configuration} & \textbf{Learnable Proj.} & \textbf{Multi-head} & \textbf{MLP} & \textbf{Unfreeze BB} & \textbf{Pitts30k} \\
        \midrule
        Training-free RIA        & \texttimes & \texttimes & \texttimes & \texttimes & 86.7 \\
        + Backbone Unfreeze      & \texttimes & \texttimes & \texttimes & \checkmark & 89.3 \\
        + Learnable Projection   & \checkmark & \texttimes & \texttimes & \checkmark & 90.8 \\
        + Multi-head ($H{=}4$)   & \checkmark & \checkmark & \texttimes & \checkmark & 92.0 \\
        \textbf{+ Per-head MLP (full)}  & \checkmark & \checkmark & \checkmark & \checkmark & \textbf{93.5} \\
        \bottomrule
    \end{tabular}
    }
\end{table}

Tab.~\ref{tab:ft_ablation} traces the incremental contribution of each fine-tuning component on Pitts30k. Unfreezing the backbone yields the largest single gain (+2.6 pts), confirming that backbone feature quality is the primary performance bottleneck in the training-free regime. The learnable projection (+1.5 pts), multi-head design (+1.2 pts), and per-head MLP (+1.5 pts) each contribute consistent further improvements, bringing the total supervised gain to +6.8 pts over the training-free baseline. The consistent additive increments suggest that each component addresses a distinct aspect of the representation: the learnable projection adapts the subspace selection to the training distribution, the multi-head design encourages specialization across different covariance modes, and the per-head MLP provides non-linear compression that further concentrates discriminative information.

\section{Implementation and Reproducibility}
\label{app:implementation}

\subsection{DINOv2-RIA-FT Training Recipe}
\label{app:training_recipe}

The supervised DINOv2-RIA-FT variant builds on a DINOv2-Large backbone with partial backbone adaptation and a lightweight multi-head RIA aggregation module. Training is performed on GSV-Cities with standard metric learning objectives and data augmentation, while preserving the core geometric design of the training-free descriptor: low-dimensional projection, covariance pooling, SPD rectification, and power-based manifold mapping. We intentionally keep the appendix focused on the high-level design choices needed to interpret the ablations, without disclosing a fully exhaustive recipe.

\subsection{Backbone Swap Setup (Tab.~3 in Main Paper)}
\label{app:backbone_swap}

For the backbone generality experiment (Tab.~3 in the main paper), we use each compared method's \textbf{backbone encoder only} to extract local patch features, discarding their native aggregation head. Specifically:
\begin{itemize}
    \item \textbf{MixVPR + RIA}: We use the ResNet50-based backbone from MixVPR to extract $N$ local feature maps, reshape them into $N \times D_{\text{in}}$ patch descriptors, and apply the RIA pipeline (random projection $\to$ covariance $\to$ ReCov $\to$ PEM $\to$ vectorization).
    \item \textbf{SALAD + RIA}: We use the DINOv2-ViT-B/14 backbone from SALAD to extract patch tokens, then apply RIA.
    \item \textbf{SelaVPR + RIA}: We use the DINOv2-ViT-B/14 backbone with SelaVPR's trained adapters to extract adapted patch features, then apply RIA.
\end{itemize}
In all cases, the RIA hyperparameters are identical to the default DINOv2-RIA configuration ($d{=}64$, $\tau{=}10^{-5}$, $\epsilon{=}10^{-4}$, $K{=}3$). No retraining or tuning is performed.

\section{Additional Qualitative Analysis}
\label{app:qualitative}

\subsection{Extended Perturbation Study: Homography-based Viewpoint Change}
\label{app:homography}

To go beyond the planar rotation and brightness perturbations in the main paper (Fig.~3), we apply random homography transformations that simulate projective viewpoint changes. We visualize two complementary comparisons: a purely structured pair (17Places and Gardens) and a mixed structured/unstructured pair (Pitts30k and Hawkins). For each test image, we sample random homographies with increasing magnitude and measure the feature distance drift ($1 - \text{cosine\_similarity}$) of RIA, MixVPR, and SALAD descriptors.

\begin{figure*}[t]
    \centering
    \begin{subfigure}[t]{0.49\textwidth}
        \centering
        \includegraphics[width=\linewidth]{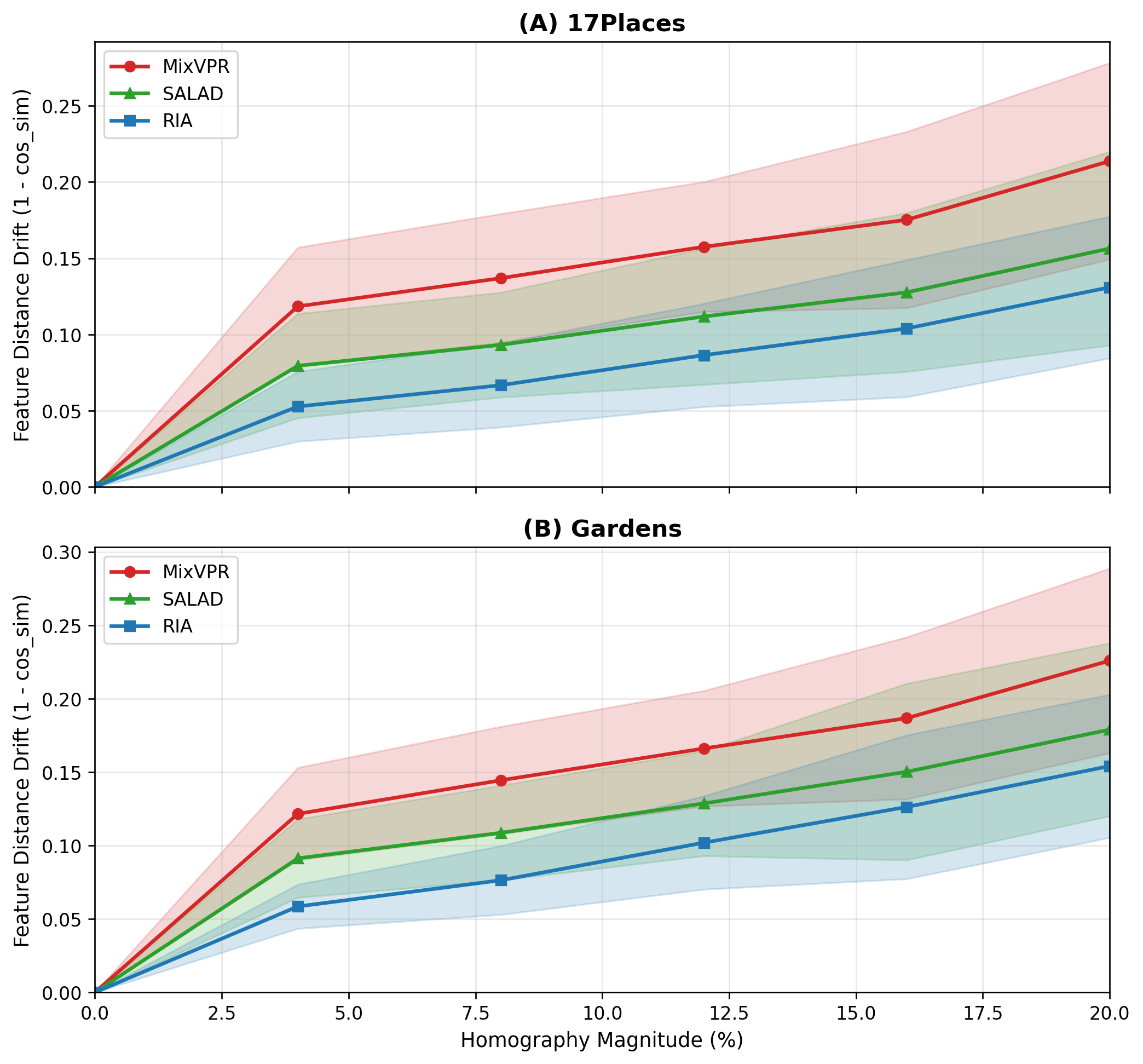}
        \caption{17Places and Gardens.}
    \end{subfigure}\hfill
    \begin{subfigure}[t]{0.49\textwidth}
        \centering
        \includegraphics[width=\linewidth]{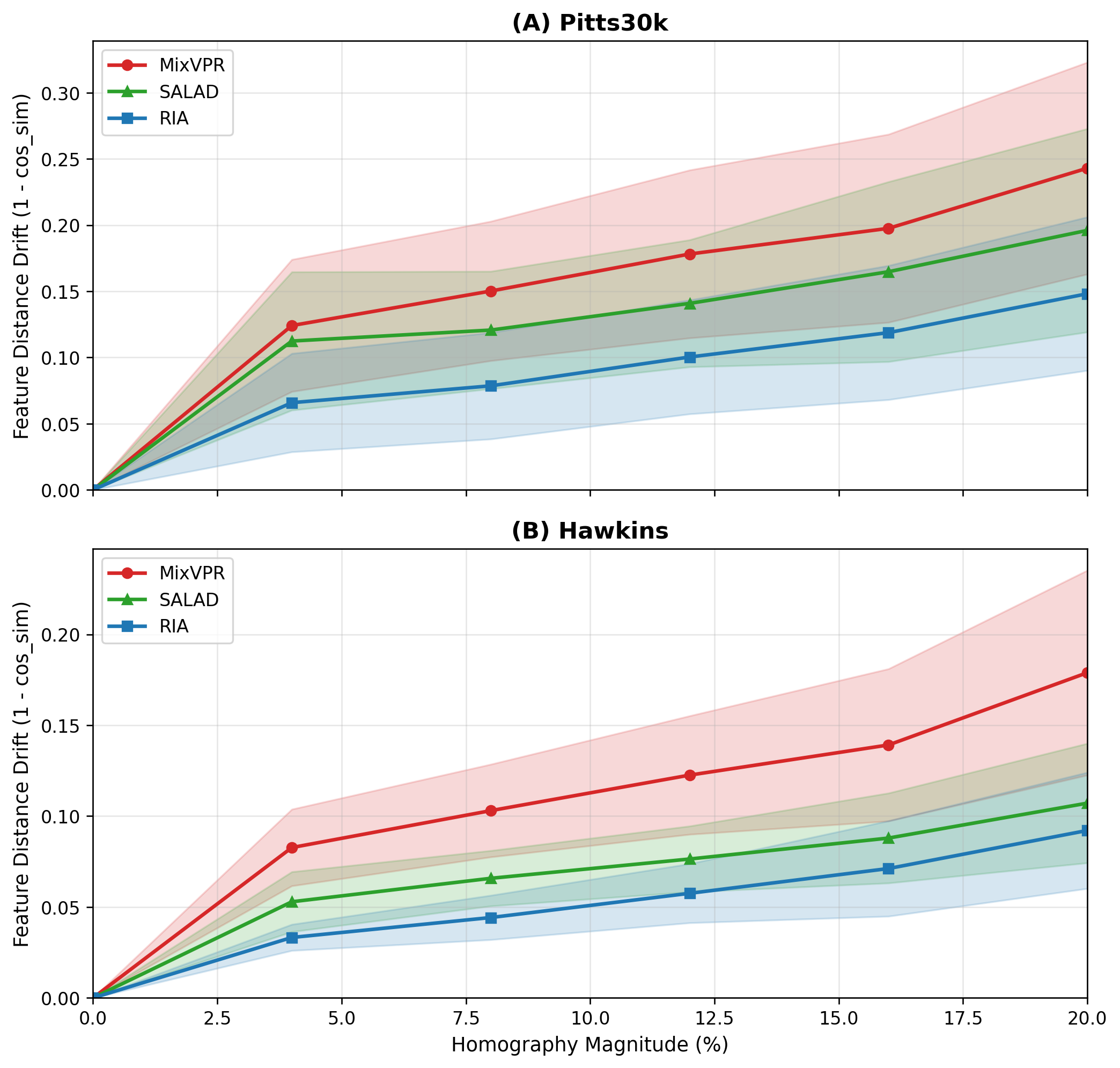}
        \caption{Pitts30k and Hawkins.}
    \end{subfigure}
    \caption{\textbf{Feature distance drift under random homography perturbations.} We apply projective transformations of increasing magnitude and compare descriptor drift for MixVPR, SALAD, and RIA. Across both the structured pair (17Places, Gardens) and the cross-regime pair (Pitts30k, Hawkins), RIA consistently exhibits the lowest drift curve over the perturbation range.}
    \label{fig:homography_drift}
\end{figure*}

Fig.~\ref{fig:homography_drift} plots feature distance drift ($1 - \cos\text{sim}$) as a function of homography magnitude across four datasets spanning both structured and unstructured settings. RIA maintains the lowest drift curve in all four panels, with a consistent margin over both MixVPR and SALAD throughout the full perturbation range.

On the structured benchmarks (17Places, Gardens, Pitts30k), the ordering is clear and stable: MixVPR exhibits the highest drift, SALAD is an intermediate, and RIA is substantially more stable---reaching approximately half the drift of MixVPR at magnitude 20\%. This is consistent with the theoretical properties of covariance descriptors: second-order statistics aggregate spatial information in a way that is naturally more stable under projective transformations than the spatial attention or global pooling mechanisms used by SALAD and MixVPR, respectively. The narrow confidence bands for RIA on these datasets further indicate that its geometric stability is consistent across images, rather than being an artifact of a few easy samples.

On Hawkins (an unstructured underground corridor), the absolute drift values for all three methods are lower, and SALAD narrows the gap with RIA at large magnitudes. This suggests that the unstructured appearance of cave scenes affords less spatial anchoring for all methods, making drift metrics converge. Nevertheless, RIA retains a clear advantage at small-to-moderate perturbation magnitudes ($\leq 10\%$), where practical viewpoint changes are most likely to occur. Across all four datasets, RIA never yields the highest drift in any panel, confirming that the Riemannian geometric framework provides a robust and general form of viewpoint invariance beyond the planar augmentations studied in the main paper.

\end{document}